\newlength\mylen
\newcommand{\Real}{\mathbb {R}}
\newcommand{\p}{\mathbb{P}}
\newcommand{\e}{ \operatorname{E}}
 \newcommand{\q}{ \mathbb{Q}}
\newtheorem{definition}{Definition}[section]
\newtheorem{theorem}{Theorem}
\newtheorem{remark}{Remark}
\newtheorem{property}[theorem]{Property}
\def\kpa2_2{\frac{\kappa^2}{2}}
\def\sigm2_2{\frac{\sigma^2}{2}}
\def\nab_dot{\nabla\cdot}
\def\1/N{\frac{1}{N}}
\def\H_1_u{H_u^{-1}}
\def\f12{\frac{1}{2}}
\def\e{\mathbf{E}}
\begin{document}
 
\begin{center}
{\bf \Large Learn Quasi-stationary Distributions of Finite State Markov Chain}

\

Zhiqiang Cai\footnote{Corresponding author.  School of Data Science, City University of Hong Kong, Tat Chee Ave, Kowloon, Hong Kong SAR. Email: zqcai3-c@my.cityu.edu.hk }
, Ling Lin\footnote{School of Mathematics, Sun Yat-sen University, Guangzhou 510275, China. Email: linling27@mail.sysu.edu.cn}
, Xiang Zhou\footnote{ School of Data Science and Department of Mathematics, City University of Hong Kong, Tat Chee Ave, Kowloon, Hong Kong SAR. Email: xiang.zhou@cityu.edu.hk}

\end{center}
 \date{\today}

\section*{Abstract}
We propose a reinforcement learning (RL) approach to compute the expression of quasi-stationary distribution.
Based on the fixed-point formulation of quasi-stationary distribution,
we  minimize the KL-divergence of two Markovian path distributions induced
by the candidate distribution and the true target distribution.
To solve this challenging minimization problem by gradient descent, we apply the reinforcement learning technique by introducing
the reward and value functions.
We  derive the corresponding policy gradient theorem and design an  actor-critic algorithm to learn
the optimal solution and the value function. The numerical examples of finite state Markov chain are tested to demonstrate the  new method.

\

{{\bf Keywords}: quasi-stationary distribution, reinforcement learning, KL-divergence, actor-critic algorithm }
\


\section{Introduction}

Quasi-stationary distribution (QSD) is  the long time statistical behavior of a stochastic process that will be surely killed when this process is conditioned to survive \cite{collet2012quasi}. This concept has been widely used applications, such as in biology and ecology \cite{buckley2010analytical,lambert2008population}, chemical kinetics \cite{de2004quasi,dykman1995statistical}, epidemics \cite{artalejo2013stochastic,clancy2011approximating,sani2007stochastic}, medicine \cite{chan2009quantitative} and neuroscience \cite{berglund2012mixed,landon2012perturbation}.  Many works for the rare events  in meta-stable systems also focus  on this quasi-stationary distribution \cite{di2017jump,lelievre2015low}. In addition, some new Monte Carlo sampling methods, for instance, the Quasi-stationary Monte Carlo method  \cite{pollock2016scalable,wang2020approximation} also arise by using the QSD
instead of true stationary distribution. 

We are interested in the numerical computation of QSD and focus on the finite state Markov chain in this paper.
Mathematically, the quasi-stationary distribution can be solved as the principal left eigenvector of a sub-Markovian transition matrix. So, the traditional numerical algebra methods can be applied to solve the quasi-stationary distribution in finite state space, for example, the power method \cite{watkins2004fundamentals}, the multi-grid  method \cite{bebbington1997parallel} and  Arnoldi's algorithm \cite{pollett1994efficient}. These eigenvector methods can produce the 
stochastic vector for QSD, instead of generating samples of QSD.

In search of   efficient algorithms for large state space, stochastic   approaches are  in favor of either sampling the QSD or computing the expression of QSD, and these methods can be   applied or extended easily to continuous state space too.
A  popular  approach for sampling   quasi-stationary distribution is the Fleming-Viot stochastic method \cite{martinez1994quasi}. The Flemming-Viot method first simulates $N$ particles independently. When any one of the particles falls into the absorbing state and gets killed, a new particle is uniformly selected from the remaining $N-1$ surviving particles to replace the dead one, and the simulation continues.
When the time and $N$ tend to infinity,
the particles' empirical distribution can converge to the quasi-stationary distribution. 

In \cite{aldous1988two,benaim2015stochastic,de2005simulate}, the authors proposed   to recursively update the expression of QSD at each iteration based on the empirical distribution of a single-particle  simulation. It is shown \cite{benaim2015stochastic} that the convergence rate can be $O(n^{-1/2})$, where 
 $n$ is the iteration number. This method is later  improved in \cite{blanchet2016analysis,zheng2014stochastic} by applying the stochastic approximation method \cite{kushner2003stochastic} and the Polyak-Ruppert averaging technique \cite{polyak1992acceleration}. These improved algorithms have a choice of flexible step size  but needs a projection operator onto probability simplex, which carries some extra computational overhead increasing with the number  of states.    \cite{wang2020approximation}  extends the algorithm to the diffusion process.


In this paper, we focus on how to compute the
expression of the quasi-stationary distribution, which is denoted by $\alpha(x)$ on a metric space $\mathcal{E}$.
If $\mathcal{E}$ is finite, $\alpha$ is a probability vector and if $\mathcal{E}$ is 
a domain in $\Real^d$, then $\alpha$ is a probability density function on $\mathcal{E}$.
We assume $\alpha$ can be numerically represented in parametric form $\alpha_\theta$, $\theta\in\Theta$.
 This family  $\{\alpha_\theta\}$ can be  in  tabular form or any neural network.
 Then the problem of finding the QSD $\alpha$ becomes how to  compute the optimal parameter $\theta$ in $\Theta$. We call this problem the learning problem for the QSD. 
In addition, we want to directly learn the QSD,
not to use the distribution family $\{\alpha_\theta\}$ to fit the simulated samples generated by other traditional simulation methods. 
 
Our minimization problem for   QSD is similar to the variational inference (VI)\cite{Blei2017VI}, which minimizes an objective functional measuring the distance between the target and candidate distributions. However, unlike the mainstream VI methods such as  evidence lower bound (ELBO) technique\cite{Jordan1999VI} or particle-based\cite{NIPS2016_b3ba8f1b}, flow-based methods \cite{pmlr-v37-rezende15},
our approach is based on recent important progresses  from  reinforcement learning(RL) method\cite{sutton2018reinforcement}, particularly the policy gradient method and actor-critic algorithm. 
We first regard the learning process of the   quasi-stationary distribution as the interaction with the environment
which is constructed by the property of QSD. 
Reinforcement learning has recently shown tremendous advancements 
and remarkable successes in applications  (e.g.\cite{mnih2013playing,silver2018general,popova2018deep}). The RL framework provides an innovative and powerful  modelling and computation approach for many scientific computing problems.

The essential question is how   to  formulate the QSD problem as an RL problem. 
Firstly,  for the sub-Markovian kernel $K$ 
of a Markov process,
we can define a Markovian kernel
$K_\alpha$ on $\mathcal{E}$ (see Definition \ref{def:K}) and then
the QSD is defined by the equation $\alpha=\alpha K_\alpha$, which 
equals $\alpha$ as the initial distribution      
and the distribution after one step.
Secondly,  
we consider an optimal $\alpha$ (in our parametric family of distribution) to  minimize the Kullback–Leibler divergence (i.e., relative entropy) of two path distributions, denoted by $\mathbb{P}$ and $\mathbb{Q}$, associated with two 
Markovian kernels $K_\alpha$ and $K_\beta$
where $\beta:=\alpha K_\alpha$.
 Thirdly,
 inspired by the  recent work \cite{rose2020reinforcement} 
 of using RL for rare events sampling problems, we transform the minimization of KL divergence between $\mathbb{P}$ and $\mathbb{Q}$ into the maximization of a time-averaged reward function and define the corresponding value function $V(x)$ at each  state $x$. 
 This completes our modelling of 
 RL for the quasi-stationary distribution problem. 
 Lastly, we  derive the policy gradient theorem (Theorem \ref{th:grad}) to compute the gradient w.r.t.  $\theta$ of the averaged reward for the   learning dynamic for the averaged reward. This is known as the ``actor" part.  The  ``critic" part is to learn the value function $V$ in its parametric form $V_\psi$.
 The actor-critic algorithm uses the stochastic gradient descent to train the parameter $\theta$ for the action $\alpha_\theta$ and the parameter $\psi$ for the value function $V_\psi$. See Algorithm \ref{dac1}.
  
Our contribution is that we first devise a method to transform the QSD problem into the RL problem. Similar to \cite{rose2020reinforcement}, our paper also uses the KL-divergence to define the RL problem. However, our paper fully adapts the unique property of QSD that is a fixed point problem $\alpha=\alpha K_\alpha$ to define the RL problem.
   
  Our learning method allows the
  flexible parametrization of the distributions and uses
  the stochastic gradient method to   train the optimal distribution. It is easy to implement  optimization  with scale up to large state spaces.
  The  numerical examples we tested have shown our methods converge faster than the other existing methods \cite{de2005simulate,blanchet2016analysis}. 
  
  Finally, we remark that 
  our method works very well for QSD of the strict sub-Markovian kernel $K$,  but not  applicable to compute the invariant distribution when $K$ is Markovian. This is 
  because we transform the problem into the variational problem between two Markovian kernels $K_\alpha$ and $K_\beta$ (where $\beta =\alpha K_\alpha$). Note $K_\alpha(x,y)=K(x,y)+(1-K(x,\mathcal{E}))\alpha(y)$ (Definition \ref{def:K}) and our method is based on the fact that $\alpha=\beta$   if and only if $K_\alpha=K_\beta$. If $K$ is Markovian kernel, then $K_\alpha\equiv K$ for any $\alpha$, and our method can not work. So  $K(x,\mathcal{E})$ has to be strictly less than 1 for some $x \in \mathcal{E}$.


This paper is organized as follows. Section \ref{s2} is a short review of the quasi-stationary distribution and some basic simulation methods of QSD. In Section \ref{s3}, we first formulate the reinforcement learning problem by KL-divergence and derive the policy gradient theorem (Theorem \ref{th:grad}). Using the above formulation, we then develop the actor-critic algorithm to estimate the quasi-stationary distribution.  In Section \ref{s5}, the efficiency of our algorithms is illustrated by four examples compared with the simulation methods in \cite{zheng2014stochastic}.

\section{Problem Setup and 
Review }\label{s2}
\subsection{Quasi-stationary Distribution}
We start with an abstract setting.
Let $\mathcal{E}$ be a finite state equipped with the Borel $\sigma$-field $\mathcal{B}(\mathcal{E})$, and  let $\mathcal{P}(\mathcal{E})$  be the space of probabilities over $\mathcal{E}$. A sub-Markovian kernel on $\mathcal{E}$ is defined as 
a map $K:\mathcal{E}\times\mathcal{B}(\mathcal{E}) \mapsto [0,1]$ such that for all $x\in\mathcal{E}, A\mapsto K(x,A)$ is a nonzero measure with $K(x,\mathcal{E})\leq 1$ and for all $A\in\mathcal{B}(\mathcal{E}),x\mapsto K(x,A)$ is measurable. Particularly, if $K(x,\mathcal{E})=1$ for all $x\in\mathcal{E}$, then $K$ is called a Markovian kernel. Throughout the paper, assume that $K$ is strictly sub-Markovian, i.e., $K(x,\mathcal{E})<1$ for some $x$.

Let  $X_t$   be a Markov chain with values in $\mathcal{E}\cup \left\{\partial\right\}$ where $\partial\notin \mathcal{E}$ denotes an absorbing state. We define the extinction time   
$$
\tau:= \inf\left\{t>0 : \quad X_t=\partial\right\}.
$$
We define the \textbf{quasi-stationary distribution(QSD)} $\alpha$ as the long time limit of the conditional distribution, if there exists a probability distribution $\nu$ on $\mathcal{E}$ such that:
\begin{equation}
\alpha(A):=\lim _{t \rightarrow \infty} P_{\nu}\left(X_{t} \in A \mid \tau>t\right),
\qquad  A\in \mathcal{B}(\mathcal{E}).
\end{equation}
where $P_\nu$ refers to the probability
distribution of $X_t$ associated with the initial distribution $\nu$ on $\mathcal{E}$.
Such a conditional distribution well describes the behavior of the process before extinction and it is 
easy to see that $\alpha$ satisfies the following fixed point problem
\begin{equation} \label{eqn:QSD}
    {P}_{\alpha}\left(X_{t} \in A \mid \tau>t\right)=\alpha(A)
\end{equation}
where $P_\alpha$ refers to the probability distribution of $X_t$ associated with the initial distribution $\alpha$ on $\mathcal{E}$. \eqref{eqn:QSD} is equivalent to the following stationary condition that 
\begin{equation}
    \alpha=\frac{\alpha K}{\alpha K \mathbf{1}}, \quad \mbox{or  }
\alpha(y)=\frac{\sum_{x} \alpha(x) K(x,y)}{ \sum_{x}\alpha(x)K(x,\mathcal{E})}
\end{equation}
where 
$\alpha$ is a row vector and $\mathbf{1}$ denotes the column vector
with all entries being one
and $$K(x,\mathcal{E})=\sum_{x'\in \mathcal{E}}K(x,x').$$

For any  sub-Markovian  kernel  $K$,
we can associate with $K$  a Markovian kernel $\tilde{K}$ on $\mathcal{E}\cup\{\partial\}$ defined by \begin{equation*}
    \left\{
    \begin{array}{l}
         \tilde{K}(x,A)=K(x,A)  \\
         \tilde{K}(x,\{\partial\})=1-K(x,\mathcal{E})\\
         \tilde{K}(\partial,\{\partial\})=1.
    \end{array}
    \right.
\end{equation*}
for all $x \in \mathcal{E}, A \in \mathcal{B}(\mathcal{E})$.
The kernel $\tilde{K}$ can be understood as the Markovian transition kernel of the Markov chain $(X_t)$ on $\mathcal{E}\cup\{\partial\}$ whose transitions in $\mathcal{E}$ is specified by $K$, but  is ``killed" forever once it leaves $\mathcal{E}$.

In this paper, we assume 
$\mathcal{E}$ is a finite state space and 
 the process in consideration 
has a unique QSD. Assume that $K$ is irreducible, then existence and uniqueness of the quasi-stationary distribution can be obtained by the Perron-Frobenius theorem \cite{meleard2012quasi}.

An important Markovian kernel
is the following $K_\alpha$
which is defined on $\mathcal{E}$ only and has
a ``regenerative probability'' $\alpha$.

\begin{definition}\label{def:K}
For any given $\alpha \in \mathcal{P}(\mathcal{E})$ and
 a sub-Markovian kernel $K$ on
 $\mathcal{E}$, we define $K_{\alpha}$,  a  Markovian kernel on $\mathcal{E}$, as follows
\begin{equation}
\label{eqn:K_alpha}
    K_{\alpha}(x,A):= K(x,A)+\left( 1-K(x,\mathcal{E})\right)\alpha(A)
\end{equation}
for all $x\in\mathcal{E}$ and $A\in\mathcal{B}(\mathcal{E})$.
\end{definition}
$K_\alpha$  is a Markovian kernel because   
$K_\alpha(x,\mathcal{E})=1$.
It  is easy to sample  
$X_{t+1}\sim K_\alpha(X_t,\cdot)$ from 
any state  $X_t\in  \mathcal{E}$: 
Run  the transition as normal by using $\tilde{K}$ to have a next state denoted by $Y$, then $X_{t+1}=Y$ if $Y\in \mathcal{E}$, otherwise, sample 
$X_{t+1}$ from  $\alpha$.

We know that $\alpha$ is the quasi-stationary distribution of $K$ if and only if it is the stationary distribution of $K_{\alpha}$, i.e., 
\begin{equation} \label{fp:Ka}
   \alpha = \alpha K_{\alpha}.
\end{equation}
It is easy to see
$\alpha=\beta$ if and only if
$K_\alpha=K_\beta$ for any two distributions $\alpha$ and $\beta$. Also, for every $\alpha'$, $K_{\alpha'}$ has a unique invariant probability denoted by $\Gamma(\alpha')$. Then $\alpha'\mapsto\Gamma(\alpha')$ is continuous in $\mathcal{P}(\mathcal{E})$ (i.e. for the topology of weak convergence) and    there exists $\alpha\in\mathcal{P}(\mathcal{E})$ such that $\alpha=\Gamma(\alpha)$ or, equivalently,   $\alpha$ is a QSD for $K$. 
  
\subsection{Review of 
simulation methods for quasi-stationary distribution}\label{ssec:sim}
According to the above subsection, 
the QSD
$\alpha$ satisfies the fixed point problem 
\begin{equation}
\label{eqn:fixed-point}
    \alpha=\Gamma(\alpha),
\end{equation}
where $\Gamma(\alpha)$
is the stationary distribution of $K_\alpha$ on $\mathcal{E}$.
In general, \eqref{eqn:fixed-point} can be solved 
recursively by $
    \alpha_{n+1}\leftarrow\Gamma(\alpha_n)
$.

The Fleming–Viot (FV) method 
\cite{martinez1994quasi} evolves $N$ particles   independently of each other as a Markov process associated with the transition kernel $K_\alpha$ until one  
succeeds in jumping to the absorbing state $\partial$. At that time, this killed particle is immediately  reset to $\mathcal{E}$ as an 
initial state uniformly chosen from one of the remaining $N-1$ particles. 
The QSD 
$\alpha$ is approximated by the empirical distribution of the $N$ particles in total
and these particles can be regarded as samples from the quasi-stationary distribution $\alpha$ like the MCMC method.

\cite{blanchet2013empirical}
proposed a simulation method by
only  using one particle  at each iteration
to update   $\alpha$. At iteration $n$, given an $\alpha_n\in\mathcal{P}(\mathcal{E})$,  one  can run the discrete-time Markov chain $X^{(n+1)}$ as  normal on $\partial\cup \mathcal{E}$ with
initial $X^{(n+1)}_0\sim \alpha_n$,
then $\alpha_{n+1}$ is computed as the following  weighted average of empirical distributions: 
\begin{equation}\label{eq:iter}
 \begin{split} 
\alpha_{n+1}(x):
=\alpha_{n}(x)+\frac{1}{n+1} \sum_{k=0}^{\tau^{(n+1)}-1} \frac{I\left(X_{k}^{(n+1)}=x \mid X_{0}^{(n+1)} \sim \alpha_{n}\right)-\alpha_{n}(x)}{\frac{1}{n+1} \sum_{j=1}^{n+1} \tau^{(j)}}
\end{split}
\end{equation}
where $n\geq0$ and $I$ is the indicator function,
$\tau^{(j)}=\min \left\{k \geq 0 \mid X_{k}^{(j)} \in \partial \right\}$ is the first extinction time for the process $X^{(j)}$.
This iterative scheme 
has the convergence rate $O(\frac{1}{\sqrt{n}})$.

In \cite{blanchet2016analysis,zheng2014stochastic}, the above method is extended to the stochastic approximations framework 
\begin{equation}\label{eqn:proj}
\alpha_{n+1}(x)=\Theta_{H}\left[\alpha_{n}+\epsilon_{n}\sum_{k=0}^{\tau^{(n+1)}-1}
\left(I\left(X_{k}^{(n+1)}=x\vert X_{0}^{(n+1)} \sim \alpha_{n}\right)-\alpha_{n}(x)\right)\right]
\end{equation}
where $\Theta_{H}$ denotes the $L_2$ projection  into the probability simplex and $\epsilon_{n}$ is the step size satisfying  $\sum{\epsilon_{n}}=\infty$ and $\sum{\epsilon^2_{n}}<\infty$. 
Specifically, if $\epsilon_{n} = O(\frac{1}{n^{r}})$ for $0.5<r<1$, under sufficient condition, they have
$
\sqrt{n^{r}}\left(\alpha_n-{\alpha}\right)\xrightarrow{\text{d}}\mathcal{N}(0,V)
$ for some   matrix $V$\cite{blanchet2016analysis,zheng2014stochastic}.
If the Polyak-Ruppert averaging technique is applied to generate 
\begin{equation}
\label{eqn:PR}
\nu_{n}:=\frac{1}{n} \sum_{k=1}^{n} \alpha_{k},
\end{equation}
then the convergence rate of $\nu_n\to \alpha$ becomes $\frac{1}{\sqrt{n}}$ \cite{blanchet2016analysis,zheng2014stochastic}.

The simulation schemes 
 \eqref{eq:iter} and \eqref{eqn:proj} 
 need to sample the initial states according to $\alpha_n$ and 
 to add
 the empirical distribution 
 and $\alpha_n$ at each $x$ pointwisely.
 So they are suitable for finite state space where $\alpha$ is a probability vector saved in the tabular form. 
In \eqref{eqn:proj}
there is no need to record all exit times  $\tau^{(j)},j=1,\dots,n$,
but the additional projection operation in \eqref{eqn:proj}  is computationally expensive since the cost is  $O(m\log m)$ where $m=|\mathcal{E}|$  \cite{boyd2004convex,wang2013projection}.

\section{Learn Quasi-stationary Distribution}\label{s3}

 We focus on the computation of the expression of the quasi-stationary distribution. Particularly, when this distribution is parametrized in a certain way by  $\theta$, we can extend the tabular form for finite-state Markov chain to any flexible form, even in the neural networks for probability density function in $\Real^d$.  But we do not pursue this representation and expressivity issue here, and   restrict our discussion to finite state space  only  to illustrate our main idea first. 
 In finite state space, $\alpha(x)$ for $x\in \mathcal{E}=\{1,\dots,m\}$, can 
 be simply described as a softmax function with $m-1$ parameter $\theta_i:\alpha(i)\propto e^{\theta_i},  1\leq i\leq m-1$, ($\theta_m=0$). This introduces no representation error. For the  generalization to continuous space $\mathcal{E}$ in jump and diffusion processes, or even for a  huge finite state space, a good representation of  $\alpha_\theta(x)$ is important in practice. 
 
 In this section,
  we shall formulate our QSD problem in terms of reinforcement learning (RL) so that the 
  problem of seeking optimal parameters becomes a policy optimization problem.
  We derive the policy gradient theorem to construct a gradient descent method for the optimal parameter. We then show how to design the actor-critic algorithms based on stochastic optimization.

\subsection{Formulation of RL and Policy Gradient Theorem}\label{s3_1}
Before introducing the RL method of our QSD problem, we develop a general formulation in terms of RL by introducing  the KL-divergence between two path distributions.

 Let $P_\theta$ and $Q_\theta$ be two
 families of Markovian  kernels on $\mathcal{E}$ in parametric forms
with the same set of parameters $\theta\in\Theta$.
Assume both $P_\theta$ and $Q_\theta$ are ergodic for any $\theta$.
Let $T>0$
and denote a path up to time $T$ by  $\omega_0^T=(X_0,X_1,\ldots, X_T)\in \mathcal{E}^{T+1}$. Define the   path distributions
under the Markov chain kernel $P_\theta$ and $Q_\theta$, respectively:
\begin{equation}\label{eqn:ptheta}
    \p_{\theta}(\omega_0^T):=\prod_{t=1}^{T}P_{\theta}(X_t\mid X_{t-1}), \quad 
      \q_{\theta}(\omega_0^T):=\prod_{t=1}^{T}Q_{\theta}(X_t\mid X_{t-1}) .
\end{equation}
 Define  the KL divergence  from 
 $\p_\theta$ to $\q_\theta$ on $\mathcal{E}^{T+1}$
\begin{equation}\label{eqn:DKL}
     \mathsf{D}_{KL}(\p_{\theta}\mid \q_\theta):=\sum_{\omega_{0}^T}{\p_{\theta}(\omega_{0}^T)\ln{\frac{\p_{\theta}(\omega_0^T)}{\q_\theta(\omega_0^T)}}}
    =-\e_{P_\theta} \sum_{t=1}^T R_\theta (X_{t-1},X_{t}),
\end{equation}
where  the expectation $\e_{P_\theta}$
is for the path 
$(X_0,X_1,\ldots,X_T)$
generated by the transition
kernel $P_\theta$ and
\begin{equation}
    \label{def:R}
    R_\theta(X_{t-1},X_t):= -\ln{\frac{P_{\theta}(X_t\mid X_{t-1})}{Q_{\theta}(X_{t}\mid X_{t-1})}}.
\end{equation}
is called the  (one-step) {\bf reward}.

Define the  {\bf{ average reward }}
$r(\theta)$
as the time averaged negative KL divergence in the limit of $T\to\infty$:
\begin{align}\label{eqn:dkl}
   r(\theta)
    &:=-\lim_{T\to \infty}{\frac{1}{T} \mathsf{D}_{KL}
    (\p_{\theta}\mid \q_{\theta})}
    = -\lim_{T\to \infty} \frac{1}{T}  \e_{P_\theta} \sum_{t=1}^T R_\theta (X_{t-1},X_{t}).
\end{align}
Due to ergodicity of $P_\theta$, $r(\theta)=\sum_{x_0,x_1}   R_\theta(x_0,x_1)P_\theta(x_1\vert x_0)   \mu_\theta(x_0) $
where $\mu_\theta$ is the invariant measure 
of  $P_\theta$.    
 $r(\theta)$ is independent of initial state $X_0$.
Obviously $r(\theta)\leq 0$ for any $\theta$.
\begin{property} The following are equivalent:  
\begin{enumerate}
    \item  $r(\theta)$ reaches its maximal value $0$ at $\theta^*$.
    \item 
    $\p_{\theta^*}=\q_{\theta^*}$ in $\mathcal{P}(\mathcal{E}^{T+1})$ for any $T>0$.
\item 
$P_{\theta^*}=Q_{\theta^*}$.
\item $R_{\theta^*}\equiv  0$.
\end{enumerate}  
 \end{property}
 
\begin{proof}
We only need to show $(1)\Longrightarrow (3)$.
It is easy to see
$$r(\theta)=-\sum_{x_0} ~\mathsf{D}_{KL}(P_\theta(\cdot \vert x_0)
~ \vert ~Q_\theta(\cdot \vert x_0)) ~ \mu_\theta(x_0).$$
If $r(\theta)=0$, 
since $\mu_\theta > 0$,
then
$$\mathsf{D}_{KL}(P_\theta(\cdot \vert x_0) 
 ~\vert~ Q_\theta(\cdot \vert x_0))=0~~\quad \forall x_0.$$
So we have $P_\theta=Q_\theta$.

\end{proof}

The above property establishes the relationship between the RL problem and QSD problem.
  
We show our theoretic main result below as the foundation of our algorithm to be developed later.
This theorem can be regarded as  one type of  the policy gradient theorem for policy gradient method in reinforcement learning  \cite{sutton2018reinforcement}.

 Define   the {\bf  value function} (\cite{sutton2018reinforcement} Chapter 13):
 \begin{equation} \label{def:V}
V \left(x\right):= \lim _{T \rightarrow \infty}\sum_{t=1}^{T} \e_{ P_{\theta}}\left[R_\theta(X_{t-1},X_t)-r(\theta)    \mid X_{0}=x\right].
\end{equation}
Certainly, $V$ also depends on $\theta$, though we do not write $\theta$ explicitly.

\begin{theorem}[policy gradient theorem]\label{th:grad}
 We have the following two 
 properties
 \begin{enumerate}
     \item  At any $\theta$, for any $x\in \mathcal{E}$, the following Bellman-type equation  holds  for the value function $V$ and the average reward $r(\theta)$:
  \begin{equation}
  \label{eqn:V}
     V (x) 
     = \e_{Y\sim P_\theta(\cdot\mid x)}\left[V (Y) +R_\theta(x,Y) -r(\theta)\right] .
  \end{equation}
     \item 
     The gradient of the average reward ${r}(\theta)$ is 
\begin{align}\label{eqn:gddkl}
    \nabla_{\theta} r(\theta)&=\e\left[ \nabla_{\theta} \ln Q_{\theta}(Y\mid X)\right]+\notag\\
      & \qquad\e  \left[
      \bigg(V (Y)-V (X)+R_\theta(X, Y)-r(\theta)\bigg)\nabla_{\theta}\ln P_{\theta} (Y \mid X ) \right],
\end{align}
 \end{enumerate}
where   the expectations    are for the joint distribution $(X,Y)\sim \mu_\theta(x)P_{\theta}(y\mid x)$ where $\mu_\theta$ is the stationary measure of $P_\theta$.
\end{theorem}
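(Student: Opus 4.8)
The plan is to prove the two parts in order, using the Markov and time-homogeneity structure of the chain for the Bellman equation, and then to differentiate \emph{that} equation (rather than $r(\theta)$ directly) to obtain the gradient formula; this route neatly avoids ever having to evaluate $\nabla_\theta \mu_\theta$.

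For part (1) I would use first-step analysis. Starting the chain at $X_0 = x$ and isolating the $t=1$ term in the defining series \eqref{def:V}, the remaining sum over $t \geq 2$ is, by the Markov property and time-homogeneity of $P_\theta$, exactly the same centered series that defines $V$ but now started from the random state $X_1 = Y \sim P_\theta(\cdot \mid x)$. Conditioning on $Y$ and reindexing gives $V(x) = \e_{Y \sim P_\theta(\cdot \mid x)}[R_\theta(x,Y) - r(\theta) + V(Y)]$, which is \eqref{eqn:V}. The only analytic point to check is convergence of the centered series; this follows from ergodicity, since subtracting $r(\theta)$ makes the expected per-step reward vanish asymptotically and the transient contributions decay geometrically.

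For part (2) the main idea is to apply $\nabla_\theta$ to the Bellman equation \eqref{eqn:V} at each fixed $x$, keeping in mind that $V$, $P_\theta$, $R_\theta$ and $r(\theta)$ all depend on $\theta$. Writing $\nabla_\theta P_\theta = P_\theta \nabla_\theta \ln P_\theta$ (the log-derivative trick) and rearranging so that $\nabla_\theta r(\theta)$ is isolated, I would then multiply through by the stationary weight $\mu_\theta(x)$ and sum over $x$. The decisive step is that the two terms carrying $\nabla_\theta V$, namely $\sum_x \mu_\theta(x) \sum_y P_\theta(y \mid x) \nabla_\theta V(y)$ and $\sum_x \mu_\theta(x) \nabla_\theta V(x)$, cancel exactly, because stationarity $\sum_x \mu_\theta(x) P_\theta(y \mid x) = \mu_\theta(y)$ turns the first into $\sum_y \mu_\theta(y) \nabla_\theta V(y)$. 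This cancellation is what silently absorbs the implicit $\theta$-dependence of the invariant measure, so $\nabla_\theta \mu_\theta$ is never needed.

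What remains is to simplify the two surviving groups of terms. In the piece weighted by $\nabla_\theta \ln P_\theta$ I would insert $-V(x)$ into the bracket for free, using $\sum_y \nabla_\theta P_\theta(y \mid x) = \nabla_\theta 1 = 0$; this manufactures the factor $V(Y) - V(X) + R_\theta(X,Y) - r(\theta)$ of \eqref{eqn:gddkl}. For the term $\sum_{x,y} \mu_\theta(x) P_\theta(y \mid x) \nabla_\theta R_\theta(x,y)$ I would substitute $R_\theta = \ln Q_\theta - \ln P_\theta$; the $-\nabla_\theta \ln P_\theta$ contribution integrates to zero against $P_\theta(\cdot \mid x)$ by the same normalization identity, leaving exactly $\e[\nabla_\theta \ln Q_\theta(Y \mid X)]$. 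Collecting the pieces yields \eqref{eqn:gddkl}. The main obstacle is conceptual rather than computational: differentiating $r(\theta) = \sum_{x,y} R_\theta(x,y) P_\theta(y \mid x) \mu_\theta(x)$ head-on produces an intractable $\nabla_\theta \mu_\theta$ term, and the entire purpose of routing through the Bellman equation is to force that term to cancel. A secondary technical point, harmless on a finite irreducible state space where every quantity is smooth in $\theta$, is to justify interchanging $\nabla_\theta$ with the limit and sum defining $V$ and the existence of $\nabla_\theta V$ and $\nabla_\theta \mu_\theta$.
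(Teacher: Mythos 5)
Your proposal is correct and follows essentially the same route as the paper: first-step analysis for the Bellman equation, then differentiating that equation, averaging against $\mu_\theta$ so the $\nabla_\theta V$ terms cancel by stationarity, and using $\sum_y P_\theta(y\mid x)\nabla_\theta\ln P_\theta(y\mid x)=0$ to dispose of the $\nabla_\theta\ln P_\theta$ part of $\nabla_\theta R_\theta$ and to insert $-V(x)$ (and $-r(\theta)$) into the bracket for free. The only difference is that you flag the convergence and differentiation-interchange issues explicitly, which the paper leaves implicit.
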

\begin{proof}
We shall prove the Bellman equation first and then we   use the Bellman equation to derive the gradient of the average reward ${r}(\theta)$.
For any $x_0\in \mathcal{E}$, by writing $\omega_0^T=(x_0,\ldots,x_T)$ and defining
$$\Delta R_\theta(\omega_0^T)=\sum_{t=1}^T (R(x_{t-1}, x_t)-r(\theta))$$
we have
\begin{align}\label{eqn:bellman}
    V\left(x_0\right)
    &=\lim _{T \rightarrow \infty} \e_{P_\theta}\left[\Delta R_\theta(\omega_0^T) \mid X_{0}=x\right]
    \notag
    \\
    &=\lim_{T\to\infty}\sum_{x_2,\ldots, x_T}\sum_{x_1}
    \left( 
    \left(\prod_{t=2}^{T}P_{\theta}(x_t\mid x_{t-1}) \right)P_{\theta}(x_1\mid x_0)\Delta R(\omega_0^T) \right)\notag
    \\
    &=\lim_{T\to\infty}\sum_{x_1}
    \left( P_{\theta}(x_1\mid x_0)\sum_{x_2,\ldots, x_T}
    \left(
     \prod_{t=2}^{T}P_{\theta}(x_t\mid x_{t-1})\left[\Delta R(\omega_1^T)+\Delta R(\omega_0^1)\right]
    \right) \right)\notag
    \\
    &=\sum_{x_1}
    \left( P_{\theta}(x_1\mid x_0) \left(
    \lim_{T\to\infty}
    \left[\sum_{x_2,\ldots,x_T}\prod_{t=2}^{T}P_{\theta}(x_t\mid x_{t-1})\Delta R(\omega_1^T)\right]+\Delta R(\omega_0^1)\right)\right) \notag\\
    &=\sum_{x_1}P_{\theta}(x_1\mid  x_0)\left[ V(x_1)+R_\theta(x_0, x_1)\right] -r(\theta),
\end{align}
which proves \eqref{eqn:V}, i.e.,
$$
r(\theta)= \e_{Y\sim P_\theta(\cdot\mid x)}\left[V (Y) + R_\theta(x,Y)- V (x)\right],
\qquad \forall x\in \mathcal{E}.
$$
 
 Next, we compute the gradient of $r(\theta)$. 
By  the trivial  equality    
  \begin{equation} \label{eqn:ir}
      \sum_{x_1}P_\theta(x_1\mid x_0) \nabla_\theta \ln P_\theta (x_1\mid  x_0)=\nabla_\theta 
  \sum_{x_1}P_\theta(x_1\mid x_0)=0, 
  \end{equation}  
  and the definition \eqref{def:R}, 
 we can write the gradient of  $r(\theta)$ as follows 
 \begin{align*}
    \nabla_{\theta} r(\theta)=& \sum_{y} \nabla_{\theta} P_{\theta}(y \mid x)\left[V(y)+R_\theta(x, y)-V(x)\right] \\
    &
    +\sum_{y} P_{\theta}\left(y \mid x\right)\left[\nabla_{\theta} V(y)-\nabla_{\theta} V(x)+\nabla_{\theta}\ln{Q_\theta(y\mid x)}\right].
\end{align*}
We here keep  the term $V(x)$  in the first line, even though it has no contribution here
(in fact, to add any constant to  $V(x)$ is also fine). 
Since this equation holds for all states $x$ on the right-hand side, we take the expectation w.r.t. $\mu_\theta$,  the stationary distribution  of $P_\theta$. 
So,     we have 
\begin{align*}
    \nabla_{\theta} r(\theta)
    =&\sum_{x,y} \mu_{\theta}(x)\nabla_{\theta} P_{\theta}\left(y \mid x\right)\left[V(y)+R_\theta(x, y)-V(x)\right] \\
     &+\sum_{x,y} \mu_{\theta}(x)P_{\theta}\left(y \mid x\right)\left[\nabla_{\theta} V(y)-\nabla_{\theta} V(x)+\nabla_{\theta}\ln{Q_\theta(y\mid x)}\right]\\
     =&\sum_{x,y} \mu_{\theta}(x)\nabla_{\theta} P_{\theta}\left(y \mid x\right)\left[V(y)+R_\theta(x, y)-V(x)\right] \\
     &+\sum_{y}\mu_{\theta}(y)\nabla_{\theta}V(y)-\sum_{x}\mu_{\theta}(x)\nabla_{\theta}V(x)+
     \sum_{x,y} \mu_{\theta}(x)P_{\theta}\left(y \mid x\right)\nabla_{\theta}\ln{Q_\theta(y\mid x)}\\
     =&\sum_{x,y} \mu_{\theta}(x)P_{\theta}\left(y \mid x\right)\bigg[V(y)+R_\theta(x, y)-V(x)\bigg]~\nabla_{\theta}\ln{P_{\theta}\left(y \mid x\right)}\\
     &+\sum_{x,y} \mu_{\theta}(x)P_{\theta}\left(y \mid x\right)\nabla_{\theta}\ln{Q_\theta(y\mid x)}.
\end{align*}
In fact,  we can add any constant number $b$ (independent of  $x$ and $y$) inside the 
squared bracket of the last line  without changing the equality, due to 
  the following fact   similar to   \eqref{eqn:ir}:  $
    \sum_{x,y} \mu_{\theta}(x)\nabla_{\theta} P_{\theta}\left(y \mid x\right) =\sum_{y}\mu_{\theta}(y) \nabla_{\theta} \sum_{x}P_{\theta}\left(x \mid y\right)=0
$.  \eqref{eqn:gddkl}
is a special case of $b=r(\theta)$.
\end{proof}

\begin{remark}
As shown in the proof,
 \eqref{eqn:gddkl} holds
 if   $r(\theta)$ 
 at the right-hand side 
 is replaced by any constant number $b$.   $b=r(\theta)$ is a good choice to reduce the variance since  $r(\theta)$ can be regarded as the expectation of $R_\theta$.
\end{remark}

\begin{remark}
If $P_\theta=Q_\theta$, then
the first term of \eqref{eqn:gddkl} vanishes due to \eqref{eqn:ir}
and the second term of \eqref{eqn:gddkl} vanishes 
due to \eqref{eqn:V}. 

\end{remark}
\begin{remark}
 The name of ``policy'' here refers to the role of  $\theta$ as the policy for decision makers to improve the reward $r(\theta)$.
\end{remark}

\subsection{Learn QSD}
Now we discuss how to 
connect the QSD with the results in the previous subsection.
In view of  equation   \eqref{fp:Ka},
we introduce $\beta:=\alpha K_\alpha$ as 
the one-step distribution if starting from the initial $\alpha$,  i.e.,
\begin{equation}
\label{eqn:beta}
\beta (y):=\sum_{x\in\mathcal{E}}{\alpha(x)K_{\alpha}(x,y)}, ~~\quad \forall y
\end{equation}

By \eqref{fp:Ka}, 
  $\alpha$ is a QSD if and only if  $\beta=\alpha$.
  However, we do not directly
  compare  these two distributions
  $\alpha$ and $\beta$.
  Instead, we consider their Markovian kernels
  induced by \eqref{eqn:K_alpha}: $K_\alpha$ and $K_\beta$.
Our approach is to consider the KL divergence similar to \eqref{eqn:DKL} between two kernels
$K_\alpha$ and $K_\beta$ since 
$\alpha=\beta$ if and only if
$K_\alpha = K_\beta$. 
In this way, one can view $K_\alpha$ and $K_\beta$ (note $\beta=\alpha K_\alpha$) as 
two transition matrices 
$P_\theta$ and $Q_\theta$ in the previous section, in which the parameter $\theta$ here is in fact the distribution $\alpha$.

To have a further representation of the distribution $\alpha$, which is a (probability mass) function on  $\mathcal{E}$,
we   
 propose a parametrized family    for  $\alpha$ in the form 
$\alpha_\theta$ where $\theta$ is a generic parameter. In the simplest case, $\alpha_\theta$ takes the so-called {\it soft-max}
form $\alpha_\theta(i)=\frac{e^{\theta_i}}{\sum_{j\geq 1} e^{\theta_j}}$
if $\mathcal{E}=\{1,\ldots,N\}$
for $\theta=(\theta_1,\ldots,\theta_{N-1},\theta_{N}\equiv 0).$
This parametrization represents $\alpha$ without any approximation error for finite state space and 
the effective space of $\theta$ is just $\Real^{N-1}$.
For certain problems, particularly  with large state space, if one has some prior knowledge about the structure of the function $\alpha$ on $\mathcal{E}$, one might propose other parametric forms of $\alpha_\theta$ with the dimension of $\theta$ less than the cardinality  $|\mathcal{E}|$ 
to improve the efficiency, although the extra representation error in this way has to be introduced.

For any given $\alpha_\theta\in \mathcal{P}(\mathcal{E})$,    the corresponding Markovian kernel $K_{\alpha_\theta}$ is then defined in  \eqref{eqn:K_alpha}
and  $\beta_\theta=\alpha_\theta K_{\alpha_\theta}i$ is defined by \eqref{eqn:beta}.
$K_{\beta_\theta}$ is like-wisely defined  by \eqref{eqn:K_alpha} again.
 To use the formulation in Section \ref{s3_1}, we choose   $P_\theta = K_{\alpha_\theta}$ and $Q_\theta = K_{\beta_\theta}$.  
Define the objective function as before: 
\begin{align*}
   {r}(\theta)
    &:=-\lim_{T\to \infty}{\frac{1}{T} \mathsf{D}_{KL}
    (\p_{\theta}\mid \q_{\theta})}
    = -\lim_{T\to \infty} \frac{1}{T}  \e_{P_\theta} \sum_{t=1}^T {R}_\theta(X_{t-1},X_{t}).
\end{align*}
where $${R}_\theta(x,y)=-\ln{\frac{K_{\alpha_\theta}(x,y)}{K_{\beta_\theta}(x,y)}}.$$


The   value function  $V(x)$ is defined like-wisely.
Theorem \ref{th:grad} now gives the expression of gradient
\begin{equation}\label{eqn:gdac1}
\begin{split}
        \nabla_\theta r(\theta) & =   \e[\big(R_\theta(X,Y)-r(\theta)+V(Y)-V(X)\big) \nabla_\theta \ln K_{\alpha_\theta}(X,Y)
        \\ &\qquad +\nabla_\theta  \ln{K_{\beta_\theta}(X,Y)}]
    \end{split}
\end{equation}
where     $(X,Y)\sim \mu_\theta(x)K_{\alpha_
\theta}(x,y)$ where $\mu_\theta$ is the stationary measure of $K_{\alpha_\theta}$.

The optimal $\theta^*$
for the QSD $\alpha_\theta$
is to  maximize $r(\theta)$ 
and this can be solved by   the gradient descent algorithm
\begin{equation}
\label{eqn:theta_t}
\theta_{t+1} = \theta_t+\eta_t^\theta \nabla_\theta r(\theta_t).
\end{equation}
where $\eta_t^\theta>0$ is the step size.
In practice, the stochastic gradient is applied  
$$\nabla_\theta r(\theta_t) \approx
{\nabla_\theta \ln K_{\alpha_\theta}(X_t,X_{t+1}) \times 
\delta (X_t,X_{t+1})+ \nabla_\theta \ln{K_{\beta_\theta}(X_t,X_{t+1})}}
$$
where $X_t, X_{t+1}$ are sampled
based on the Markovian kernel $K_{\alpha_\theta}$ ( see Algorithm \eqref{dac1})
and   the differential temporal (TD) error $\delta_t$ is 
\begin{equation}
\label{TD}
\delta_t=\delta(X_t,X_{t+1}) = R_\theta(X_t,X_{t+1}) - r(\theta_t) + V (X_{t+1}) - V (X_t).
\end{equation}

Next, we need to address a remaining issue to address:
how to compute the value function $V$ and $r(\theta_t)$
in the TD error \eqref{TD}.
Besides, we also need to show the details of computing $\nabla_\theta K_{\alpha_\theta}$ and $\nabla_\theta K_{\beta_\theta}$.

\subsection{Actor-Critic Algorithm}

 With the stochastic gradient method \eqref{eqn:theta_t}, we can obtain the optimal policy $\theta^*$. We refer to \eqref{eqn:theta_t} as the learning dynamics for the policy and it is generally known as actor.
 To calculate the value function $V$  appearing in $\nabla r(\theta)$, we need to have a new learning dynamics, which is called {\it critic}. Then the overall policy-gradient method is termed as the actor-critic method.

  We start  with
    the Bellman equation \eqref{eqn:V} for   the value function and consider   the mean-square-error loss 
\begin{equation*}
    \operatorname{MSE}[V]=\frac12 \sum_{x}\nu(x)\left(\sum_{y}K_{\alpha_\theta}(x,y)\left[V(y)+R_\theta(x,y)-r(\theta)\right]-V(x)\right)^{2} 
\end{equation*}
where  $\nu$ is any distribution
supported on $\mathcal{E}$.  
$ \operatorname{MSE}[V]=0$ if and only if $V$ satisfies the Bellman equation \eqref{eqn:V}, i.e. $V$ is the value function. To learn $V$, 
   we   introduce  the function approximation for the value function, $V_{\psi}$, with the parameter $\psi$ and 
   consider to   minimize 
\begin{equation*}
    \operatorname{MSE}(\psi)=\frac12 \sum_{x}\nu(x)\left(\sum_{y}K_{\alpha_\theta}(x,y)\left[V(y)+R_\theta(x,y)-r(\theta)\right]-V_{\psi}(x)\right)^{2} 
\end{equation*}
by  the semi-gradient method (\cite{sutton2018reinforcement}, Chapter 9):
\begin{align*}
     \nabla_{\psi}\operatorname{MSE}(\psi)&=-\sum_{x,y}\nu(x)K_{\alpha_\theta}(x,y)\left[V(y)+R_\theta(x,y)-r(\theta)-V_{\psi}(x)\right]\nabla_{\psi}V_{\psi}(x)\\
     &\approx-\sum_{x,y}\nu(x)K_{\alpha_\theta}(x,y)\left[V_{\psi}(y)+R_\theta(x,y)-r(\theta)-V_{\psi}(x)\right]\nabla_{\psi}V_{\psi}(x)
\end{align*}
Here the term $V(y)$ is
frozen first and then approximated by $V_\psi$ since it could be treated as a  prior guess of the value  function for the future state.

Then for the gradient descent
iteration $
\psi_{t+1} =\psi_t - \eta_t^\psi  \nabla_{\psi}\operatorname{MSE}_{V}(\psi_t)
$
 where $\eta_t^\psi$ is
the step size,
we can have the stochastic gradient iteration
\begin{equation} \label{psit}
   \psi_{t+1}=\psi_t+\eta_{t}^{\psi}~ \delta(X_t,X_{t+1})~ \nabla_{\psi}V_{\psi_t}(X_t)
\end{equation}
   where the differential temporal (TD) error $\delta$ defined above in \eqref{TD}:
 $$  \delta_t=\delta(X_t,X_{t+1}) = R_{\theta_t}(X_t,X_{t+1}) - r(\theta_t) + V_{\psi_t} (X_{t+1}) - V_{\psi_t} (X_t) .$$
Here for simplicity,  $(X_t,X_{t+1})$ are the same samples as in the actor method for $\theta_t$.  This means that the distribution $\nu$ above is chosen as $\mu$ used for the gradient $\nabla_\theta r(\theta)$.

   Next, we consider  
 the calculation of  the   reward $r(\theta)$. Again by   the Bellman equation \eqref{eqn:V}
$$
\sum_{x}\mu(x)\sum_{y} K_{\alpha_\theta} (x,y) (R_\theta(x,y)-r(\theta)+V (y)-V (x)) = 0 
$$
Let $r_t$ be the estimate of the   reward $r(\theta_t)$ at time $t$. We can update our estimate of the reward every time a transition occurs as
\begin{equation}
\label{rt}
r_{t+1} = r_t +\eta_t^r \times \delta_t
\end{equation}
 where $\delta_t$ is the TD error before
$$
\delta_t=\delta(X_t,X_{t+1}) = R_{\theta_t}(X_t,X_{t+1}) - r_t + V_{\psi_t} (X_{t+1}) - V_{\psi_t} (X_t).
$$

In conclusion,  \eqref{eqn:theta_t}\eqref{psit}\eqref{rt} together
consist of the actor-critic algorithm,
which is summarized in Algorithm \ref{dac1}. We remark that    Algorithm \ref{dac1} 
  can be easily adapted to use the mini-batch gradient method where several 
copies of $(X_t,X_{t+1})$ are sampled 
and the average is used to update the parameters.
The stationary distribution
$\mu_{\theta}$ of $K_{\alpha_\theta}$ is 
sampled by running the corresponding Markov chain  
for several steps with ``warm start'':  the initial for $\theta_{t+1}$ is set   as the final state generated from the previous iteration at  $\theta_t$. The length of
this ``burn-in'' period 
 can be set as just one step
 in practice for efficiency.

\begin{algorithm}[th]
  \SetKwData{Left}{left}\SetKwData{This}{this}\SetKwData{Up}{up}
  \SetKwFunction{Union}{Union}\SetKwFunction{FindCompress}{FindCompress}

  \textbf{Initialization}\\
  $t=0$;
  $\theta = \theta_0$;
  $\psi = \psi_0$; 
  $r_t=r_0$;
  \\
  Sample  $X_0\sim \mu_{\theta_0},$     the stationary distribution of
    $K_{\alpha_{\theta_0}}$
  \\
  \For{$t=0,1,2,\dots $}
   {
   Sample $X_{t+1}$ from the transition kernel $K_{\alpha_{\theta_t}}(X_t,X_{t+1})$\\
   $
   \delta_t=V_{\psi_t}(X_{t+1})-V_{\psi_t}(X_t)+R_{\theta_t}(X_t,X_{t+1})-r_t
   $\\
   $
   \theta_{t+1}=\theta_{t}+\eta_{t}^{\theta}
   ~( \delta_t    \nabla_\theta \ln K_{\alpha_{\theta_t}}(X_t,X_{t+1})     +  \nabla_\theta  \ln  K_{\beta_{\theta_t}}(X_t,X_{t+1}))
   $
   \\
   $
   \psi_{t+1}=\psi_t+\eta_{t}^{\psi}~ \delta_t \nabla_{\psi}V_{\psi_t}(X_t)
   $\\
   $
   r_{t+1}=r_t+\eta_{t}^{r}~ \delta_t
   $
   \\
   $
   X_{t} \sim \mu_{\theta_{t+1}}
   $ the stationary distribution of $K_{\alpha_{\theta_{t+1}}}$
   \\
   $   t=t+1   $
   }
  \caption{
  ({\bf  ac-$\alpha$ method})  Actor Critic algorithm for quasi-stationary distribution $\alpha_\theta$}\label{dac1}
\end{algorithm}
\begin{remark}
Finally, we remark the computation of $\nabla_\theta\ln{K_{\alpha_\theta}}$ and $\nabla_\theta\ln{K_{\beta_\theta}}$ in Algorithm \ref{dac1}. The details are shown in Appendix. We comment that the main computational cost is the function $K(x,\mathcal{E})$, which has to be pre-computed and stored. If the problem has some special structure, the function could be approximated in parametric form. Another special case is our example 2 where $K(x,\mathcal{E})=0\quad\forall x\in\{2,3,\dots,N\}$.
\end{remark}

\section{Numerical experiment}\label{s5}
In this section, we present two examples to demonstrate \textbf{Algorithm} \ref{dac1}.
We call    the algorithm \eqref{eq:iter}, \eqref{eqn:proj} and \eqref{eqn:PR} in Section \ref{ssec:sim}  used in  \cite{blanchet2016analysis,zheng2014stochastic}, 
as   \textbf{Vanilla Algorithm}, \textbf{Projection Algorithm} and \textbf{Polyak Averaging Algorithm} respectively.
Let    0 be the absorbing state and $\mathcal{E}=\{1,\ldots,N\}$ are non-absorbing
states,   the Markov transition matrix on $\{0,\ldots, N\}$ is denoted  by
$$
\tilde{K}=
\left[\begin{array}{cc}
1 & 0  \\
* & K
\end{array}\right],
$$ 
where $K$ is an $N$-by-$N$ sub-Markovian  matrix.
For   \textbf{Algorithm}  \ref{dac1}, the  distribution $\alpha_\theta$ on $\mathcal{E}$ is always parameterized as 
$$
\alpha_\theta=\frac{1}{e^{\theta_1}+\ldots+e^{\theta_{N-1}}+1}\left[e^{\theta_1},\ldots,e^{\theta_{N-1}}, 1\right],
$$
and the value function  
$V_\psi(x)$ is represented  in tabular form
for simplicity:
$$V_\psi=[\psi_1,\ldots,\psi_N]$$ where $\psi\in \Real^N$.

\subsection{Loopy Markov chain}
We  test a toy example of the three-state loopy Markov chain
which was  considered in \cite{blanchet2016analysis,zheng2014stochastic}.
The  transition probability matrix 
for the  four state $\{0,1,2,3\}$ is 
$$
\tilde{K}=
\left[\begin{array}{cccc}
1 & 0 & 0 &0\\
\epsilon & \frac{1-\epsilon}{3} & \frac{1-\epsilon}{3}&\frac{1-\epsilon}{3} \\
\epsilon & \frac{1-\epsilon}{3} & \frac{1-\epsilon}{3}&\frac{1-\epsilon}{3}\\
\epsilon & \frac{1-\epsilon}{3} & \frac{1-\epsilon}{3}&\frac{1-\epsilon}{3}
\end{array}\right],\quad\epsilon\in(0,1).
$$
The state 0 is the absorbing state $\partial$ and $\mathcal{E}=\{1,2,3\}$.
$K$ is the sub-matrix of $\tilde{K}$ corresponding to the states
$\{1,2,3\}$.
With  the probability $\epsilon$,
the process exits $\mathcal{E}$ directly from state 1, 2 or 3. The true quasi-stationary distribution of this example is the uniform distribution for any $\epsilon$.

In order to show the advantage of our algorithm, we consider two cases: (1) $\epsilon=0.1$ and  (2) $\epsilon=0.9$.
For a larger  $\epsilon$,
the original Markov  chain is very easy to exit so each iteration takes less time,
but  the 
convergence  rate 
of Vanilla algorithm is slower.

To quantify the accuracy of the learned quasi-stationary distribution, we compute the $L_2$ norm of the error between the learned quasi-stationary distribution and the true values.
\begin{figure}[ht]
\centering
\setcounter {subfigure} {0} (a){\includegraphics[scale=0.35]{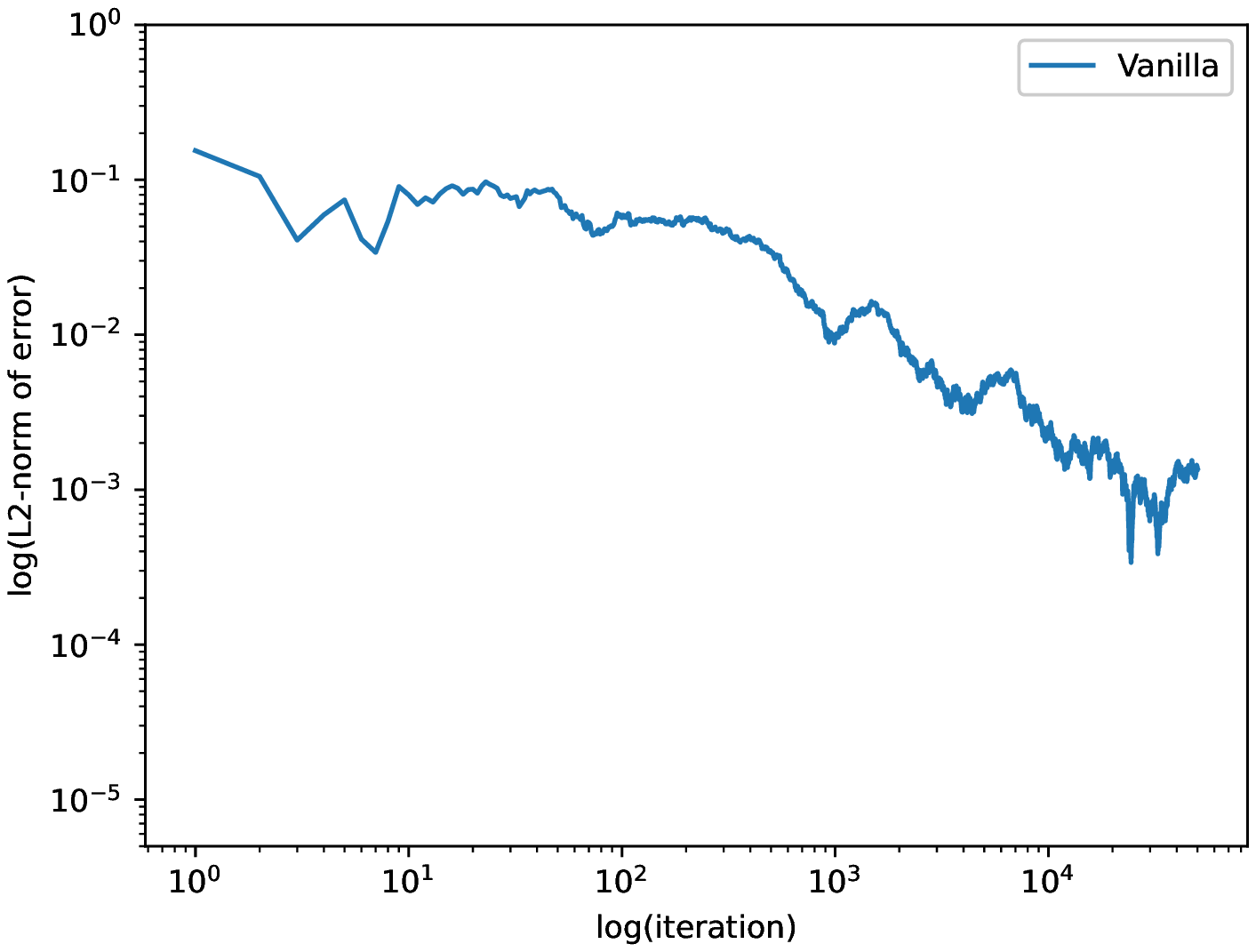}}
\setcounter {subfigure} {0} (b){\includegraphics[scale=0.35]{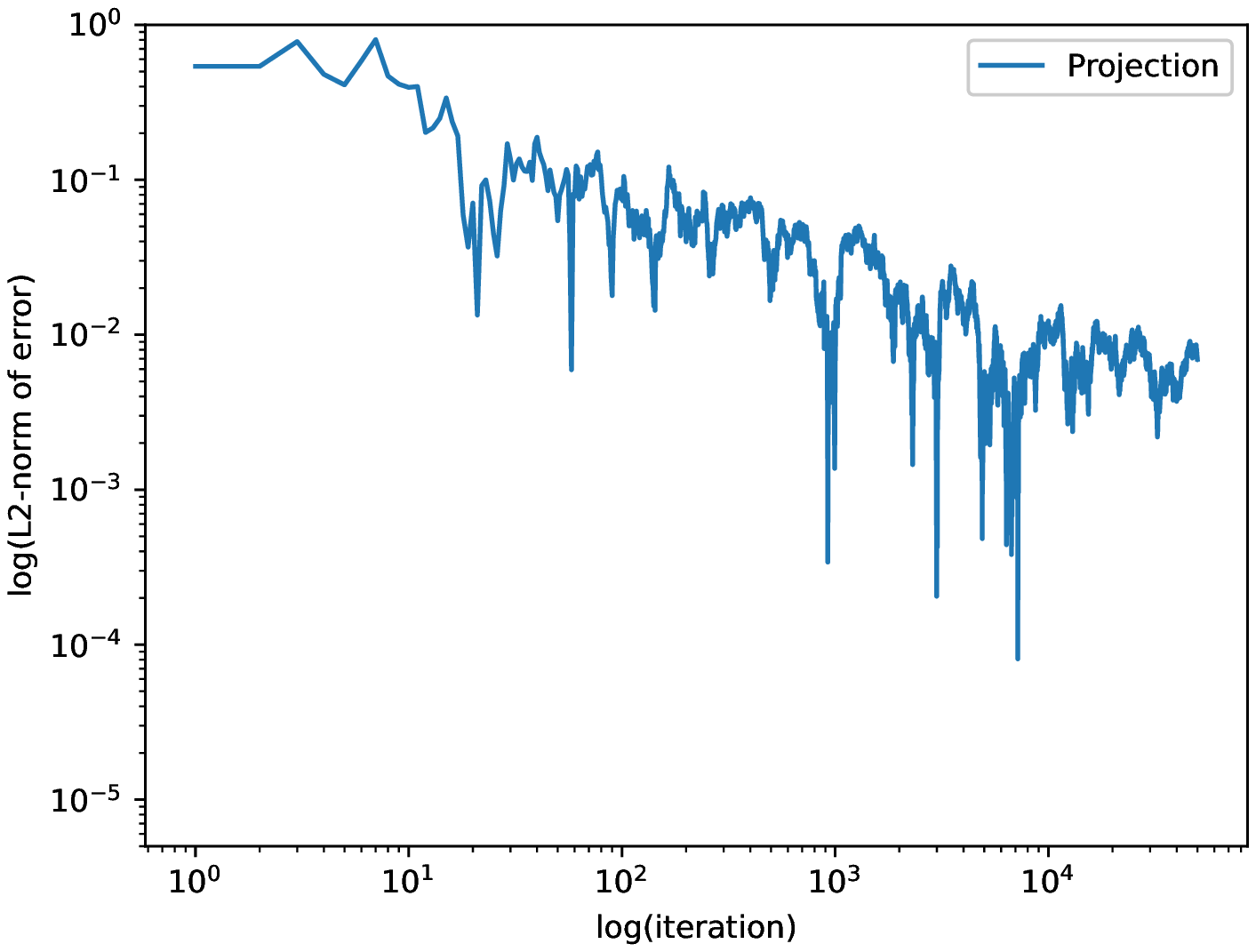}}
\\
\setcounter {subfigure} {0} (c){\includegraphics[scale=0.35]{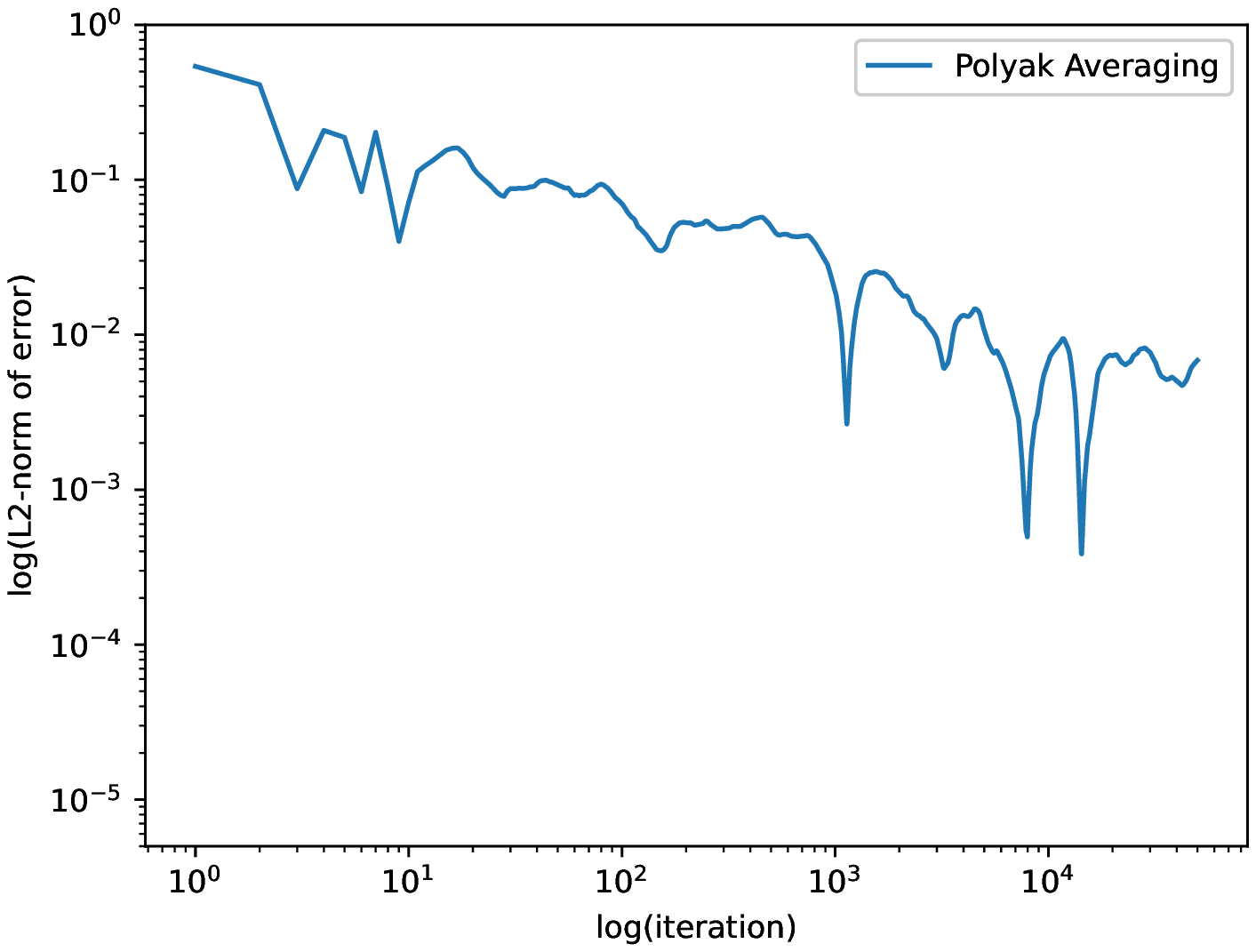}}
\setcounter {subfigure} {0} (d){\includegraphics[scale=0.35]{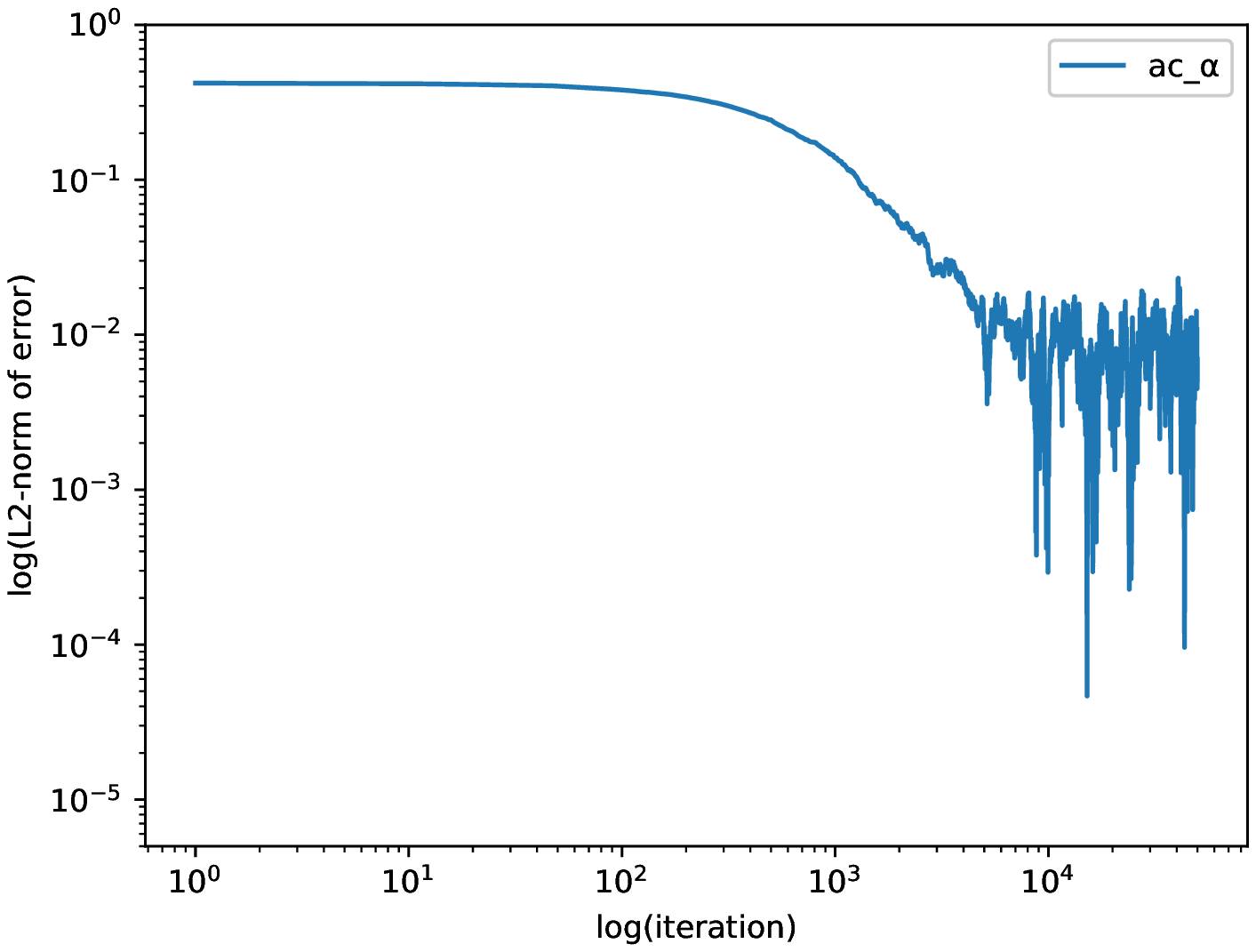}}
\caption{The loopy Markov chain example with $\epsilon= 0.1$. The figure shows the log-log plots of $L_2$-norm error
of Vanilla Algorithm (a), Projection Algorithm(b), Polyak Averaging Algorithm (c)
and our actor-critic algorithm (d). 
The iteration  for actor-critic algorithm
is defined as one step of gradient descent (``$t$'' in 
Algorithm \ref{dac1}).}
\label{fig:e1_1_1}
\end{figure}

In Figure \ref{fig:e1_1_1}, we compute the QSD when $\epsilon =0.1$. We set the initial value $\theta_0=[-1,1],\psi_0=[0,0,0],r_0=0$, the learning rate $\eta_n^\theta=\max\{1/{n^{0.1}},0.2\},\eta_n^\psi=0.0001,\eta_n^r=0.0001$ and the batch size is 4. The step size for Projection Algorithm is $\epsilon_n=n^{-0.99}$.  Figure \ref{fig:e1_2_1} is for the case when $\epsilon=0.9$   We set the initial value $\theta_0=[4,-2],\psi_0=[0,0,0],r_0=0$, the learning rate $\eta_n^\theta=0.04,\eta_n^\psi=0.0001,\eta_n^r=0.0001$ and the batch size is 32. The step size for Projection Algorithm is $\epsilon_n=n^{-0.99}$.

\begin{figure}[ht]
\centering
\setcounter {subfigure} {0} (a){\includegraphics[scale=0.35]{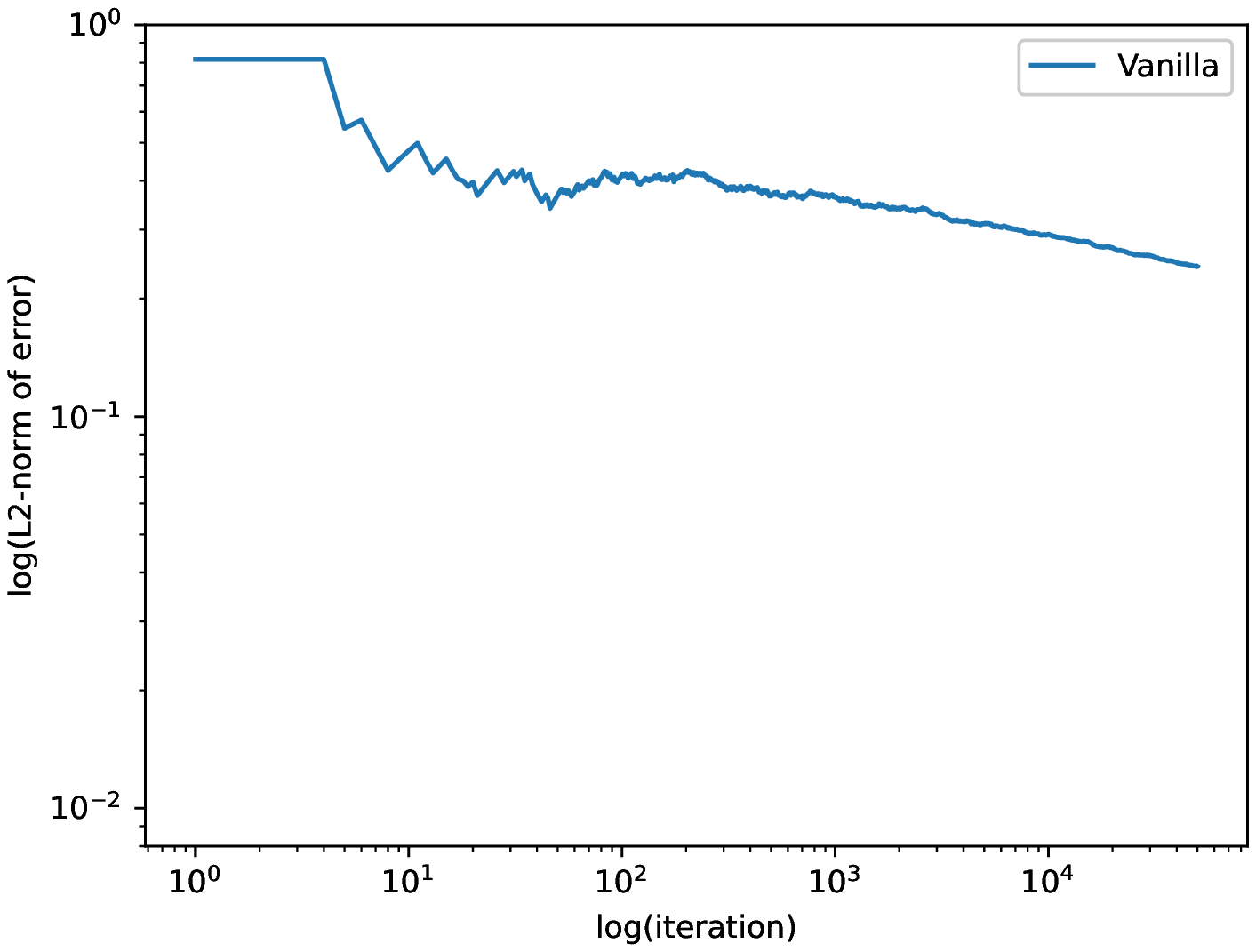}}
\setcounter {subfigure} {0} (b){\includegraphics[scale=0.35]{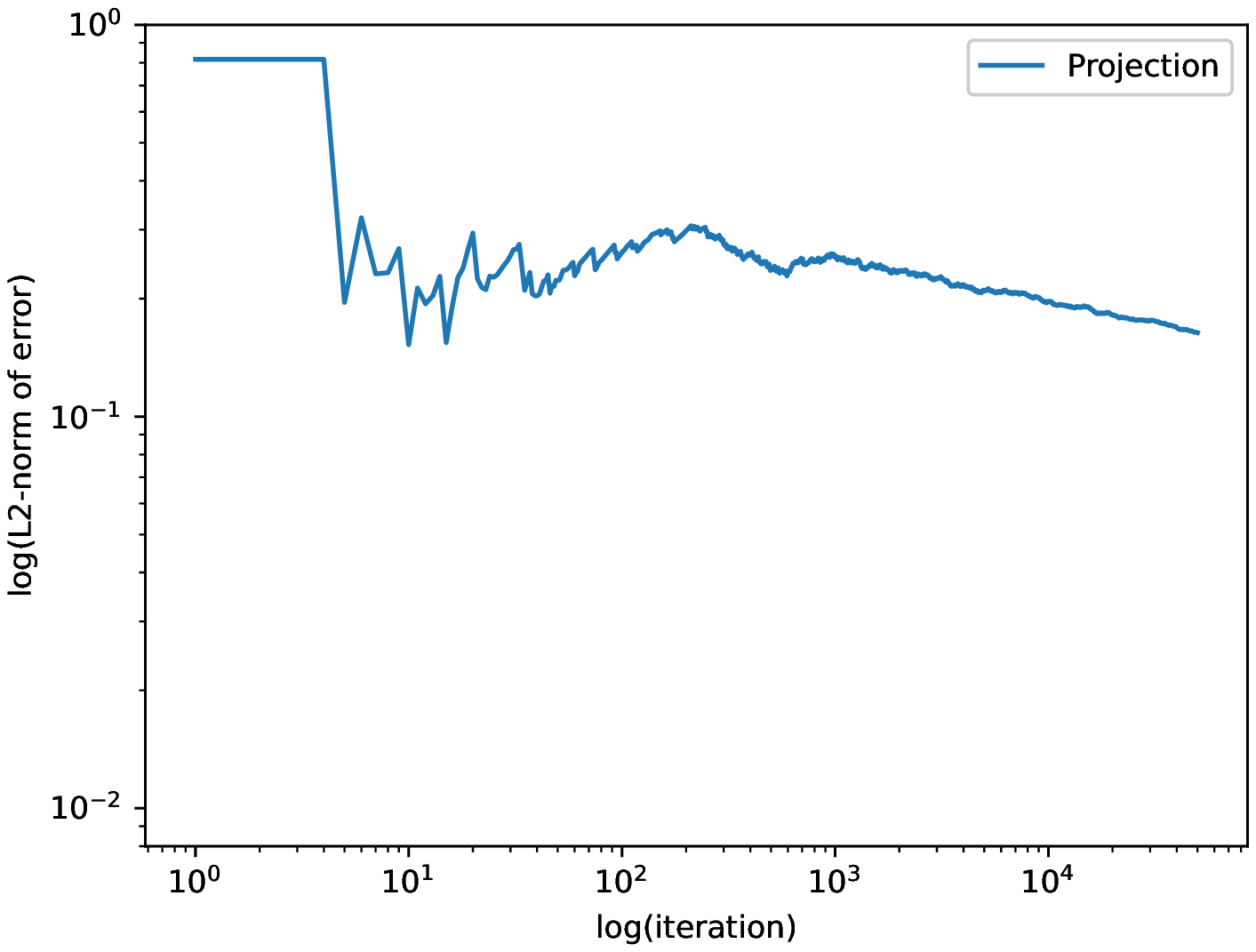}}
\\
\setcounter {subfigure} {0} (c){\includegraphics[scale=0.35]{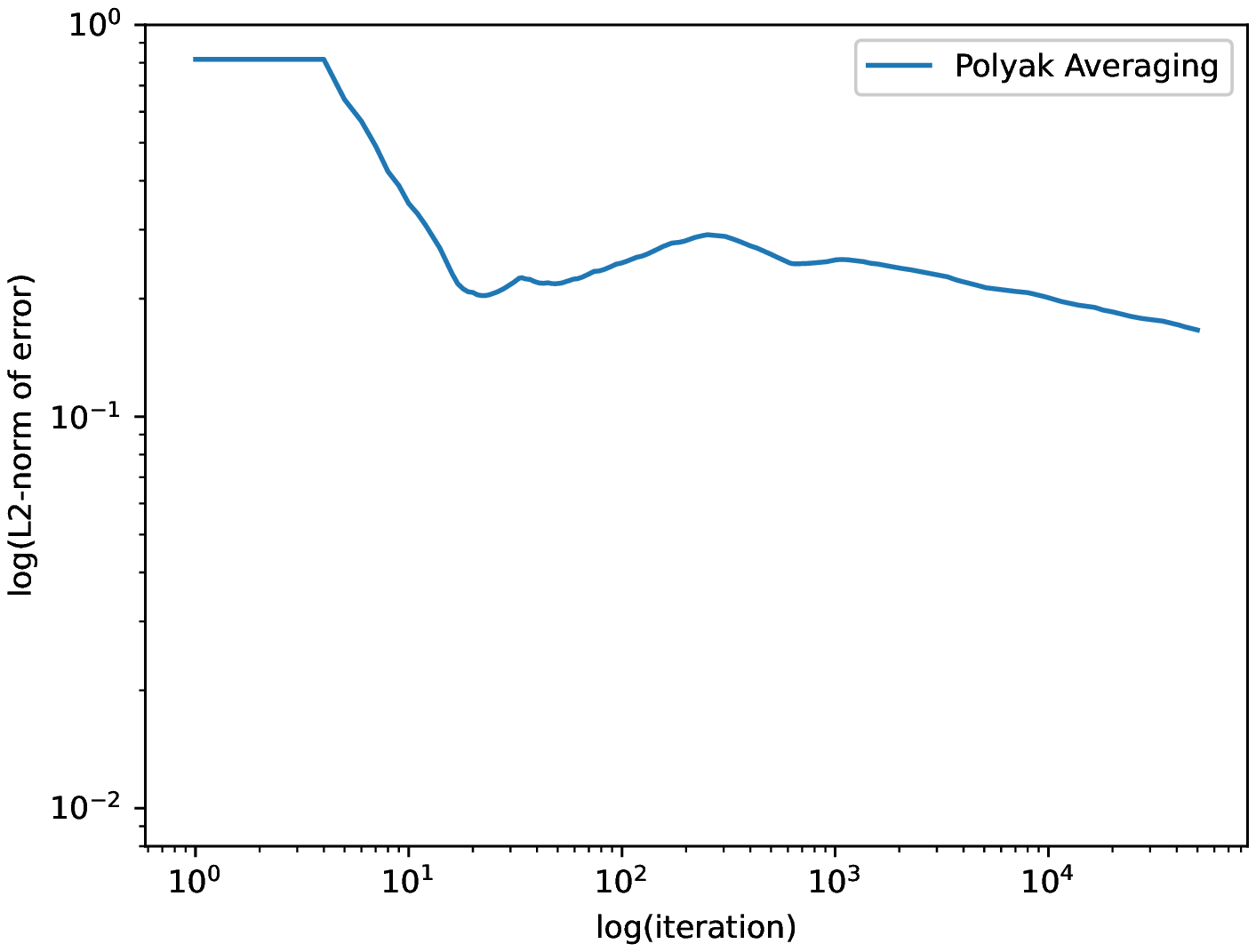}}
\setcounter {subfigure} {0} (d){\includegraphics[scale=0.35]{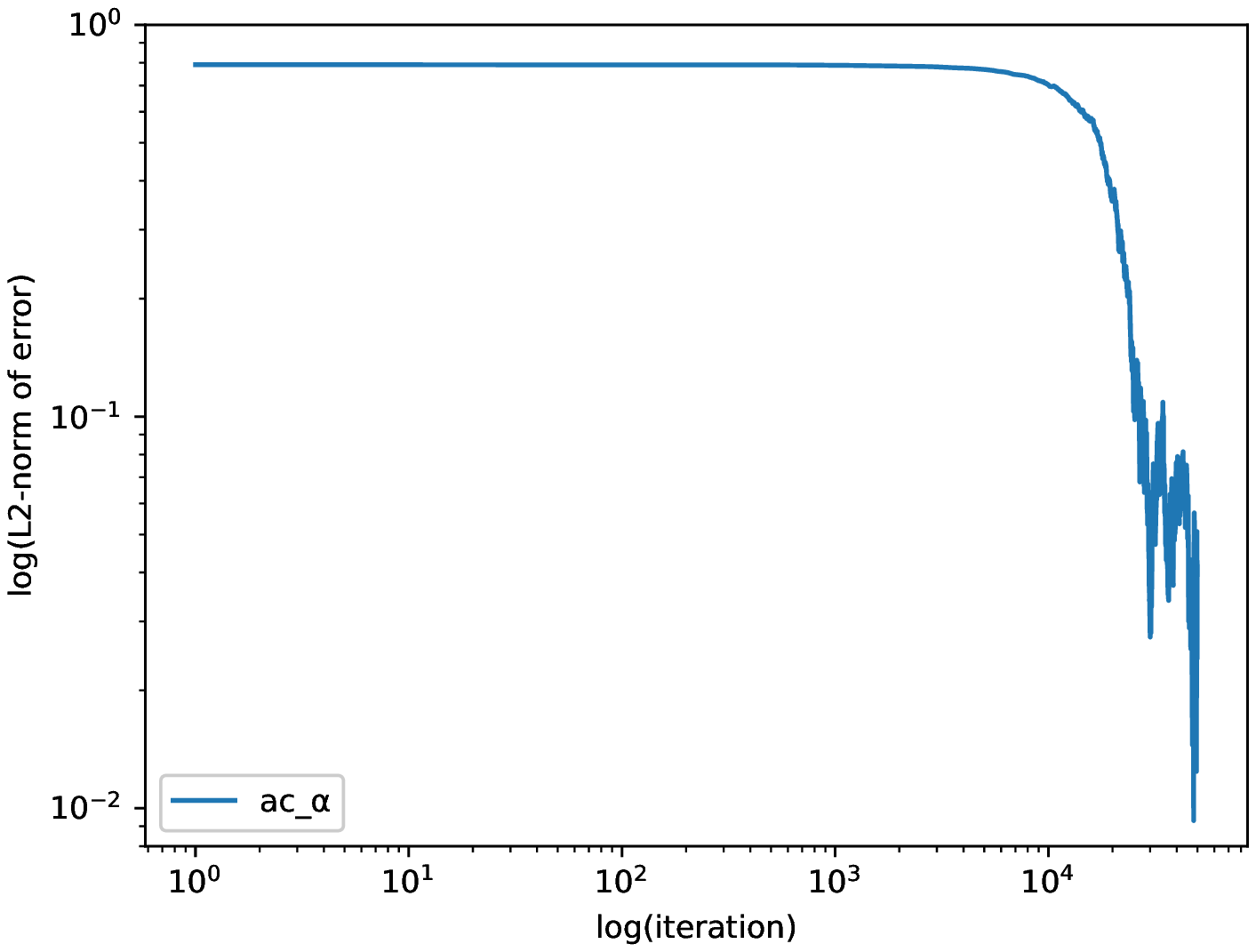}}
\caption{The loopy Markov chain example with $\epsilon= 0.9$. The figure shows the log-log plots of $L_2$-norm error
of Vanilla Algorithm (a), Projection Algorithm(b), Polyak Averaging Algorithm (c)
and our actor-critic algorithm (d). }
\label{fig:e1_2_1}
\end{figure}

\subsection{M/M/1/N queue with finite capacity and absorption}
Our second example is    a M/M/1 queue with finite queue capacity. The 0 state has been set as an absorbing state. The transition probability matrix on $\{0,\ldots,N\}$ takes  the  form
$$ \tilde{K}=
\left[\begin{array}{cccccccc}
1&0&0&0&0&\ldots&0&0\\
\mu_1&0 & \lambda_1 & 0 & 0 & \ldots & 0 & 0 \\
0&\mu_2 & 0 & \lambda_2 & 0 & \ldots & 0 & 0 \\
0&0 & \mu_3 & 0 & \lambda_3 & \ldots & 0 & 0 \\
\vdots&\vdots & \vdots & \vdots & \vdots & & \vdots & \vdots \\
0&0 & 0 & 0 & 0 & & 0 & \lambda_{N-1} \\
0&0 & 0 & 0 & 0 & \ldots & 1 & 0
\end{array}\right]
$$
where $\lambda_i = \frac{\rho_i}{\rho_i+1}$, $\mu_i = \frac{1}{\rho_i+1}$, $i\in\{1,2,\cdots,N-1\}$.
$\rho_i>1$ means 
a higher  chance  to jump to right than to left. A larger $\rho_i$ 
will have less probability of exiting $\mathcal{E}$.
Note that 
$K(x,\mathcal{E})=1$ for $x\in \{2,\ldots,N\}$.
So $K_{\alpha}(x,y)=K(x,y)$ for 
any $\alpha$ if $x\neq 1$
and $K_{\alpha}(1,y)=K(1,y)+\mu_1 \alpha(y)=\begin{cases} \lambda_1 + \mu_1 \alpha(1) & y=1,\\
\mu_1\alpha(y) & 2\leq y\leq N \\ \end{cases}.$
Then  ${R}_\theta(x,y)=-\ln{\frac{K_{\alpha_\theta}(x,y)}{K_{\beta_\theta}(x,y)}}=0$ if $x\neq 1$ and  by \eqref{eqn:gdac1}, the gradient is simplified as $$
 \nabla_\theta r(\theta)   =   \e_Y[\big(R_\theta(1,Y)-r(\theta)+V(Y)-V(1)\big) \nabla_\theta \ln K_{\alpha_\theta}(1,Y)
         +\nabla_\theta  \ln{K_{\beta_\theta}(1,Y)}]$$
where $Y$ follows the distribution  
$K_{\alpha}(1,\cdot)$.

\begin{figure}[ht]
\centering
\includegraphics[scale=0.35]{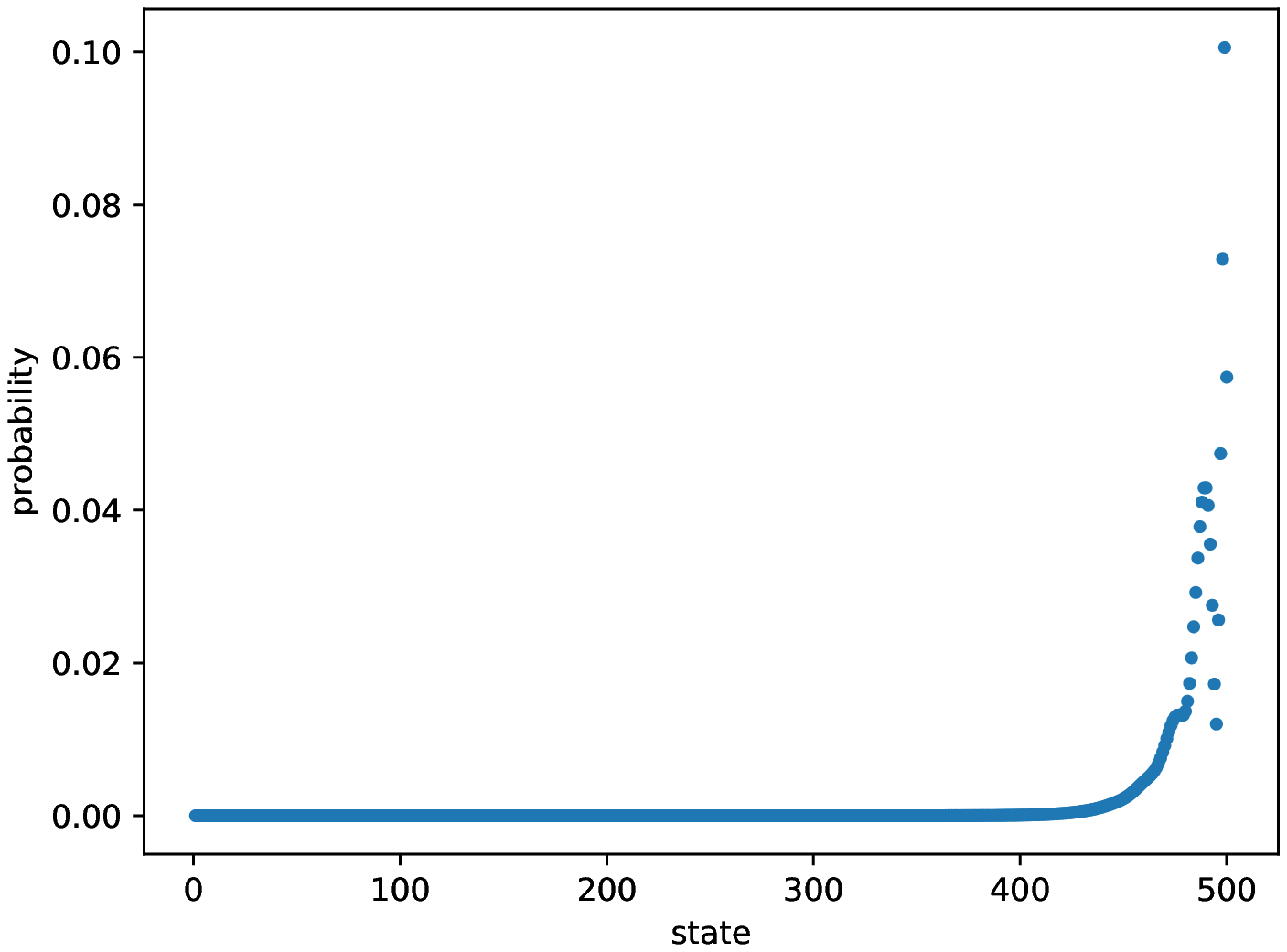} 
\includegraphics[scale=0.35]{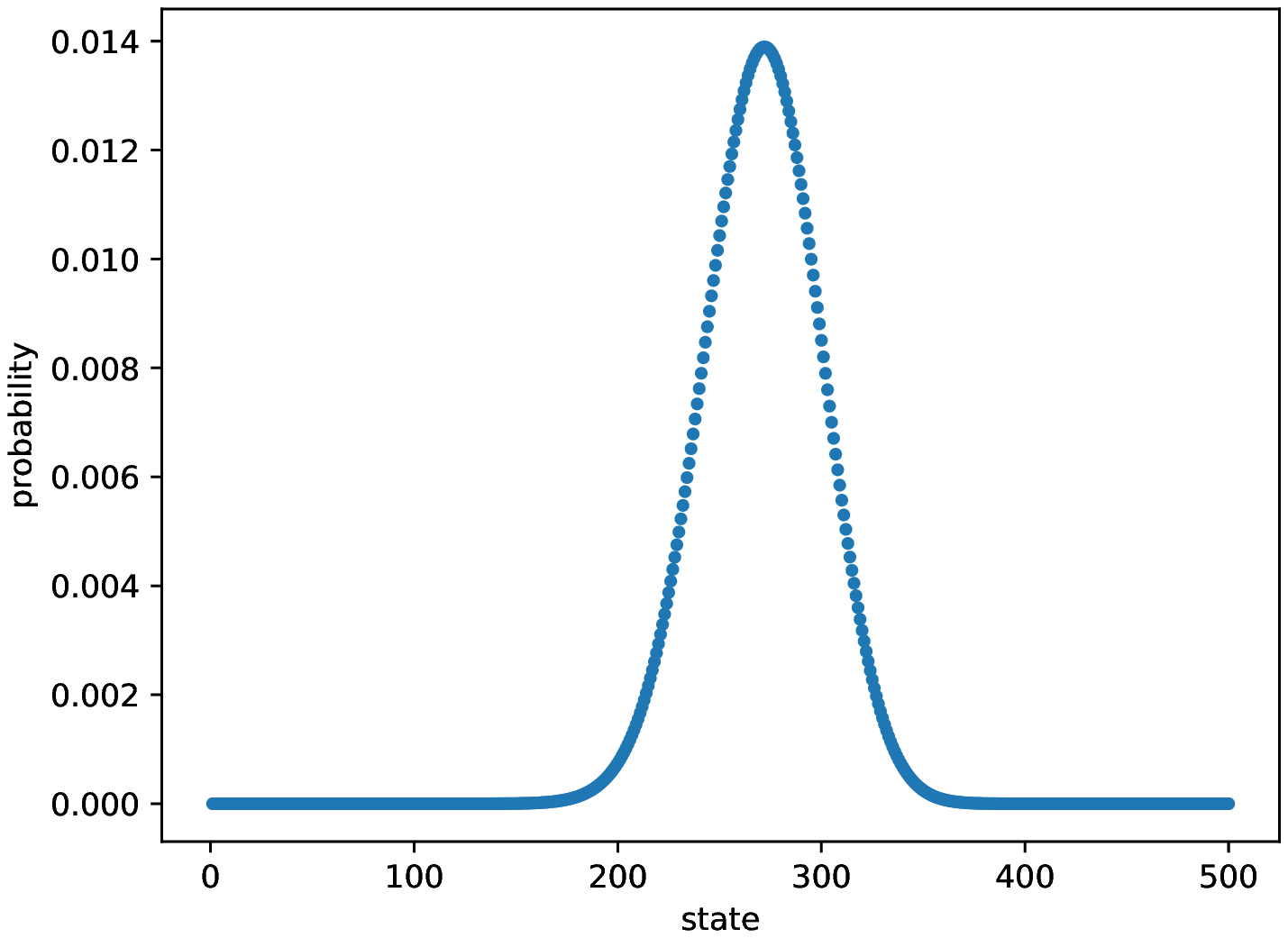} 

\caption{The QSD for M/M/1/500 queue with $\rho_i\equiv 1.25$
(left)
and $\rho_i=2-\frac{3}{2N-4}(i-1)$ (right).}
\label{fig:e2_1_3}
\end{figure}

We consider  two cases:
(1) a constant 
$\rho_i=1.25$ and (2)
a state-dependent  
$\rho_i=2-\frac{3}{2N-4}(i-1)$.  Note  $\rho_i=1$ gives  an equal  probability  of jumping to  left and  to right. So
in case (1),  there is a boundary  layer at the most right  end
and in case (2), we expect to see a peak of the QSD near 
$i\approx 2N/3$.
Figure \ref{fig:e2_1_3} shows the true QSD in both cases.
 We set $N=500$.

In    Figure \ref{fig:e2_1_1}, we consider the case when $\rho_i=1.25$ and compute the $L_2$  errors. We set the initial value $\theta_0^i=-35+\frac{35}{498}(i-1)$ for $i\in\{1,2,\dots,498\}$ and $\theta^{499}_0=3$, $\psi_0=[0,0,\dots,0]$, $r_0=0$, and the learning rate $\eta_n^\theta = 0.0003,\eta_n^\psi=0.0001,\eta_n^r=0.0001$ and the batch size is 64. The step size for Projection Algorithm is $\epsilon_n=n^{-0.95}$.
  Figure \ref{fig:e2_2_1} plots the
errors for the state-dependent
$\rho_i=2-\frac{3}{2N-4}(i-1)$.
We set the initial value $\theta_0^i=8+\frac{35}{250}(i-1)$ for $i\in\{1,2,\dots,250\}$, $\theta_0^{251}=44$, $\theta_0^i=43$ for $i\in\{252,\dots,305\}$, $\theta_0^{306}=48$, $\theta_0^{307}=42$ and $\theta_0^i=43-\frac{38}{293}(i-1)$ for $i\in\{308,309,\dots,499\}$, $\psi_0=[0,0,\dots,0],r_0=0$ and the learning rate is $\eta_n^\theta = 0.0002$, $\eta_n^\psi=0.0001,\eta_n^R=0.0001$ with batch size as 128. The step size for Projection Algorithm is $\epsilon_n=n^{-0.95}$.
 Both figures demonstrate the actor-critic algorithm performs quite well on this example.

In Table \ref{tb:e2_2}, we compared the CPU time of each algorithm in the M/M/1/500 queue when they obtain the accuracy at $2\times10^{-1}$. We found that our algorithm cost less time on this example.

\begin{figure}[ht]
\centering
\setcounter {subfigure} {0} (a){\includegraphics[scale=0.35]{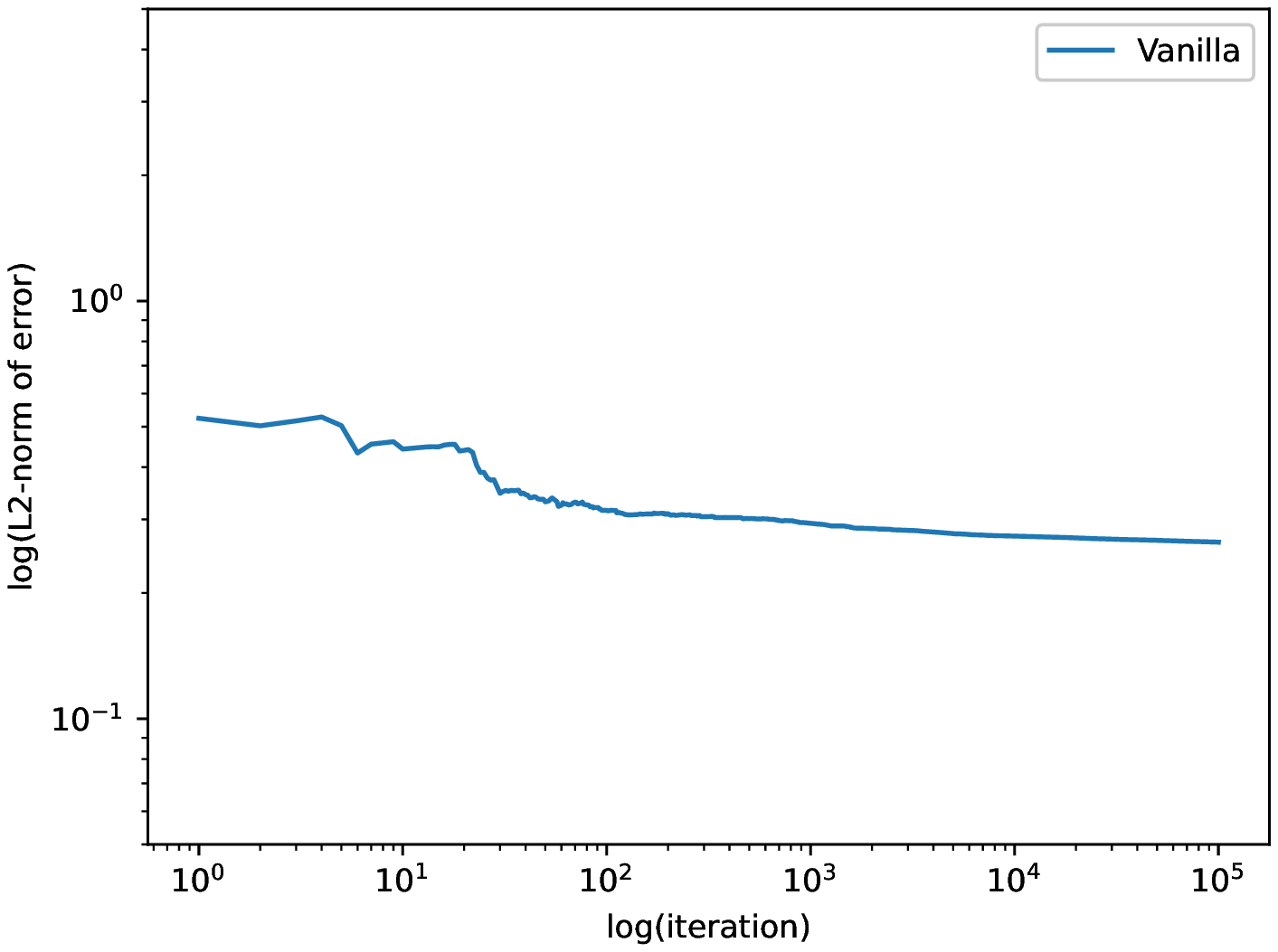}}
\setcounter {subfigure} {0} (b){\includegraphics[scale=0.35]{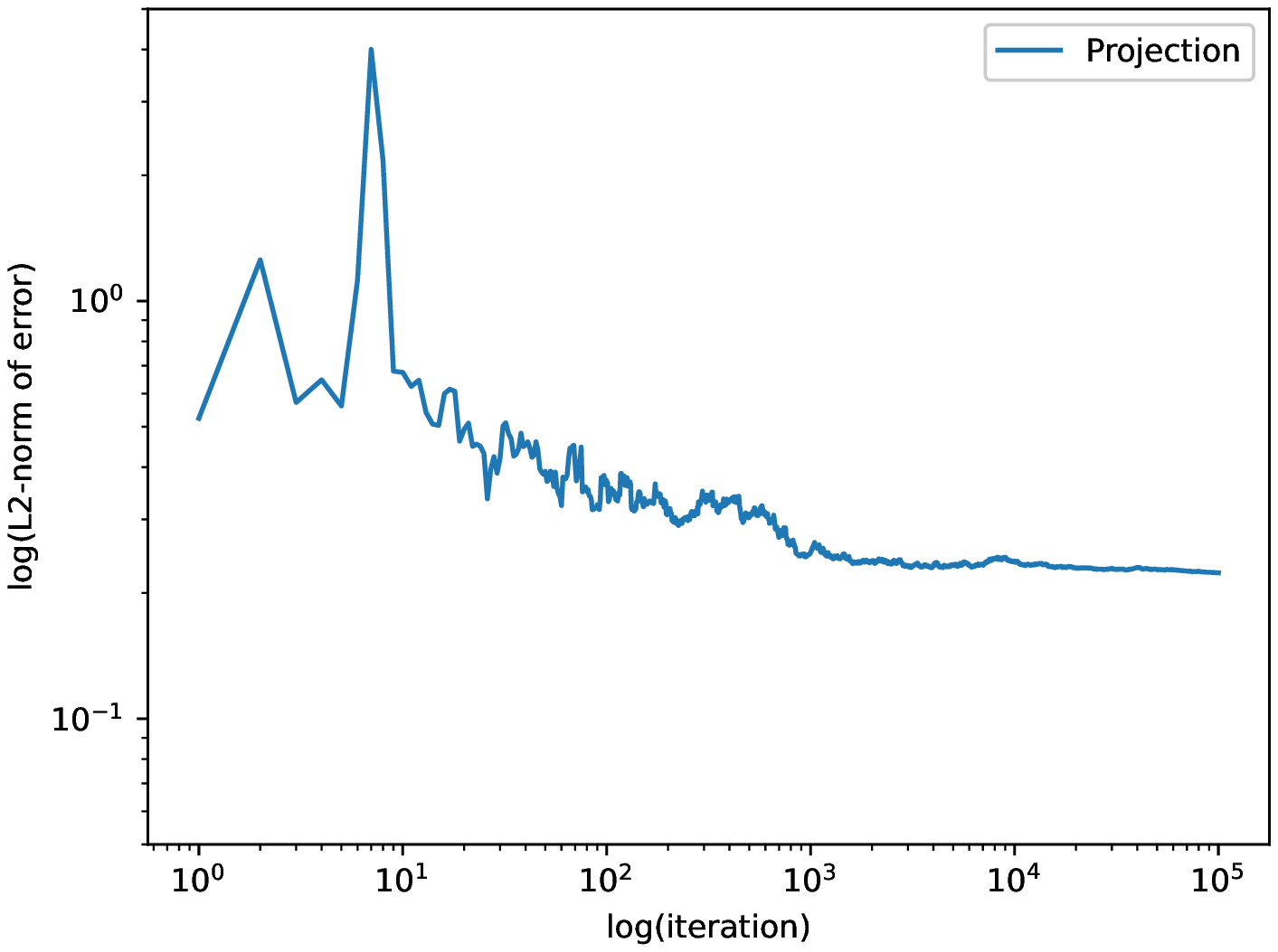}}
\\
\setcounter {subfigure} {0} (c){\includegraphics[scale=0.35]{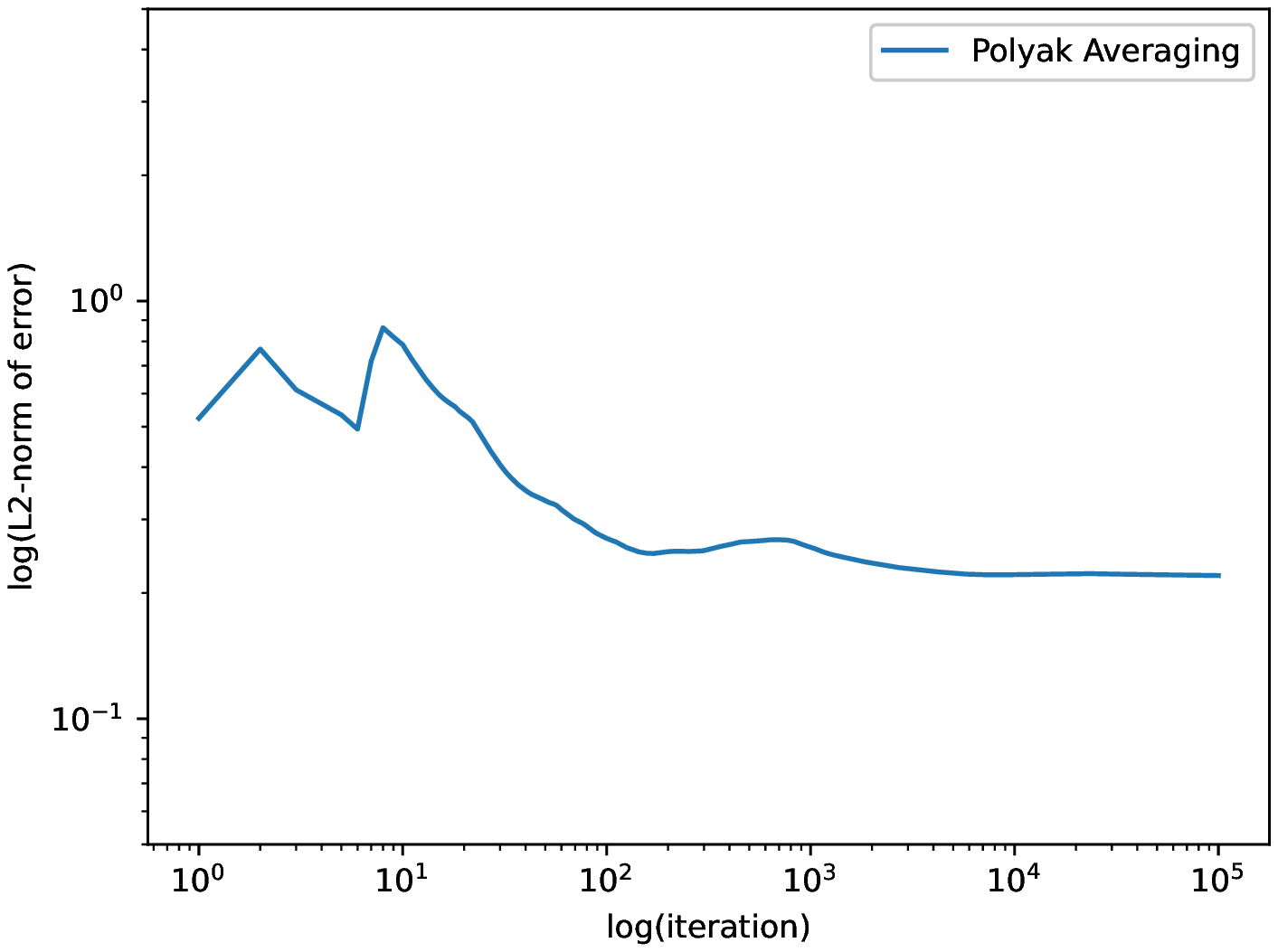}}
\setcounter {subfigure} {0} (d){\includegraphics[scale=0.35]{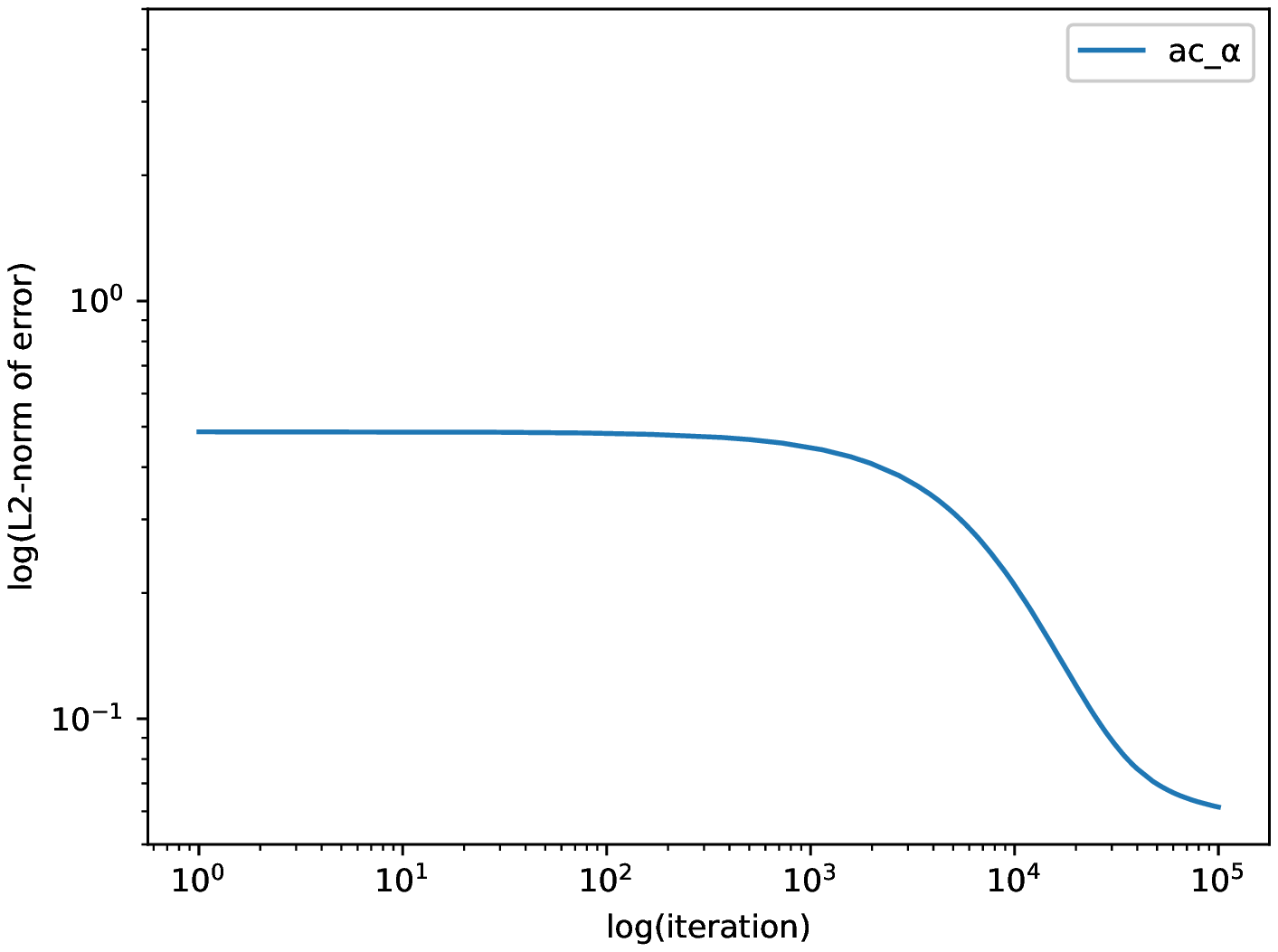}}
\caption{The M/M/1/500 queue with $\rho_i=1.25$. The figure shows the log-log plots of $L_2$-norm error
of Vanilla Algorithm (a), Projection Algorithm(b), Polyak Averaging Algorithm (c)
and our actor-critic algorithm (d).}  
\label{fig:e2_1_1}
\end{figure}

\begin{figure}[ht]
\centering
\setcounter {subfigure} {0} (a){\includegraphics[scale=0.35]{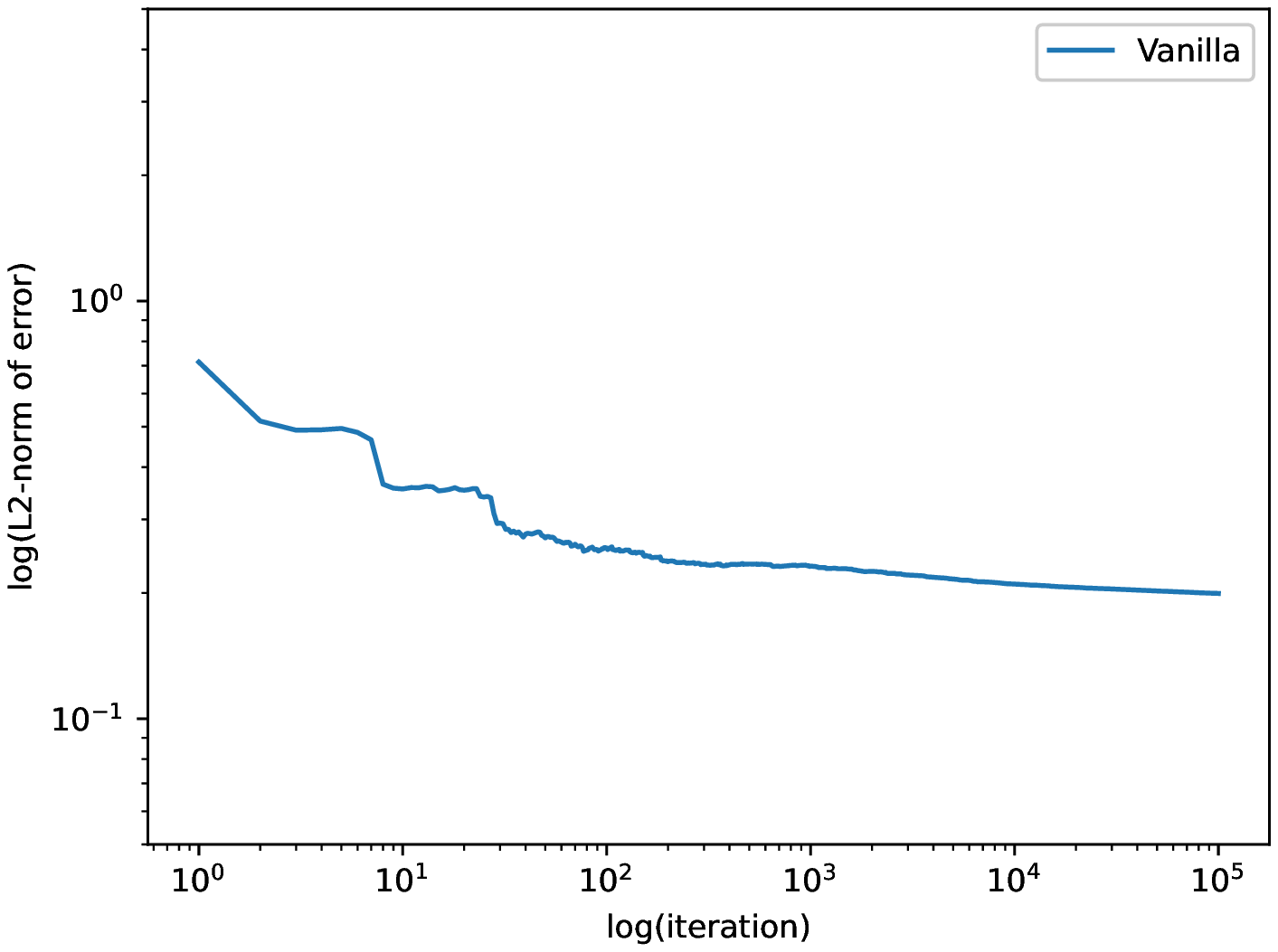}}
\setcounter {subfigure} {0} (b){\includegraphics[scale=0.35]{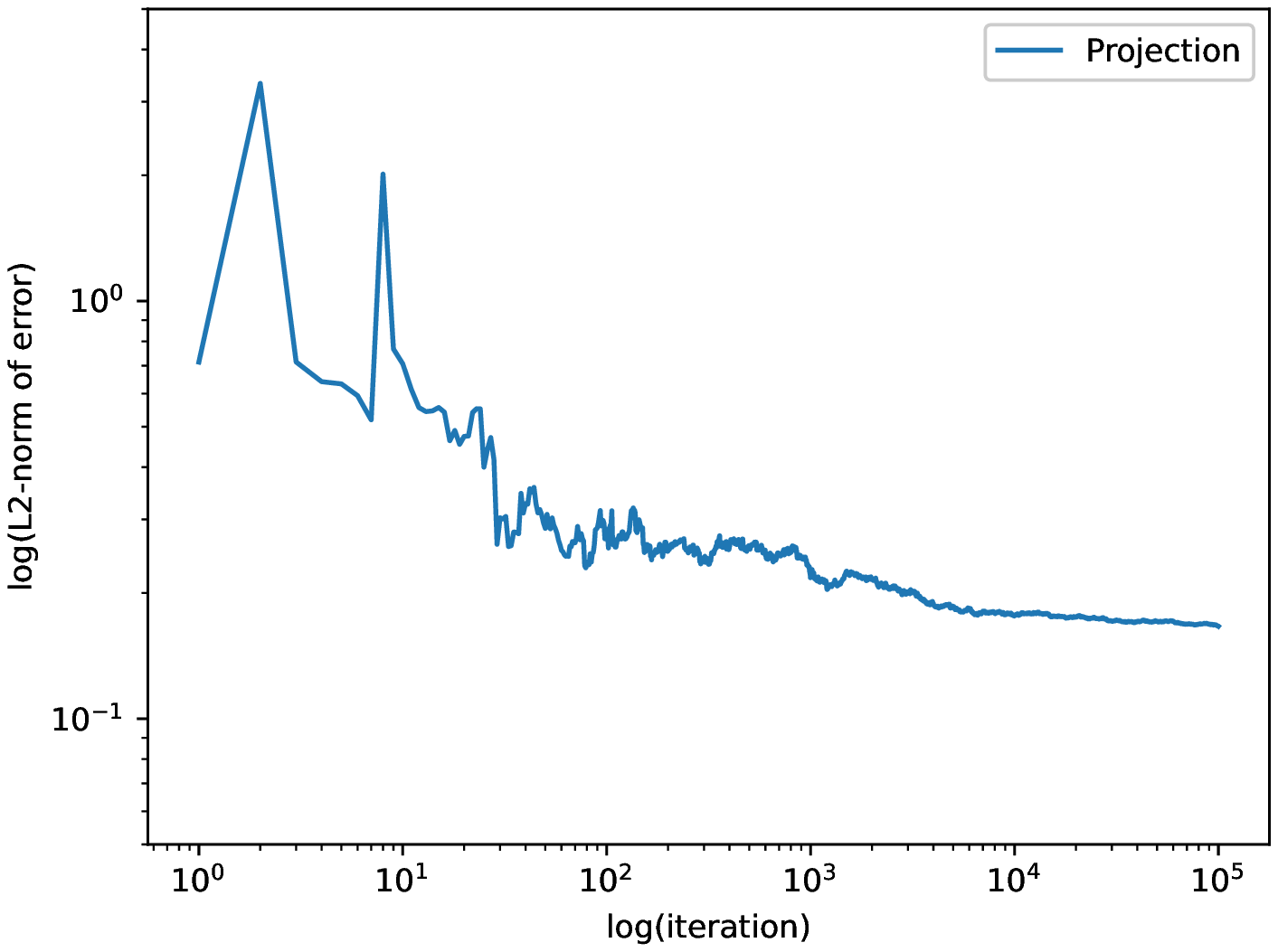}}
\\
\setcounter {subfigure} {0} (c){\includegraphics[scale=0.35]{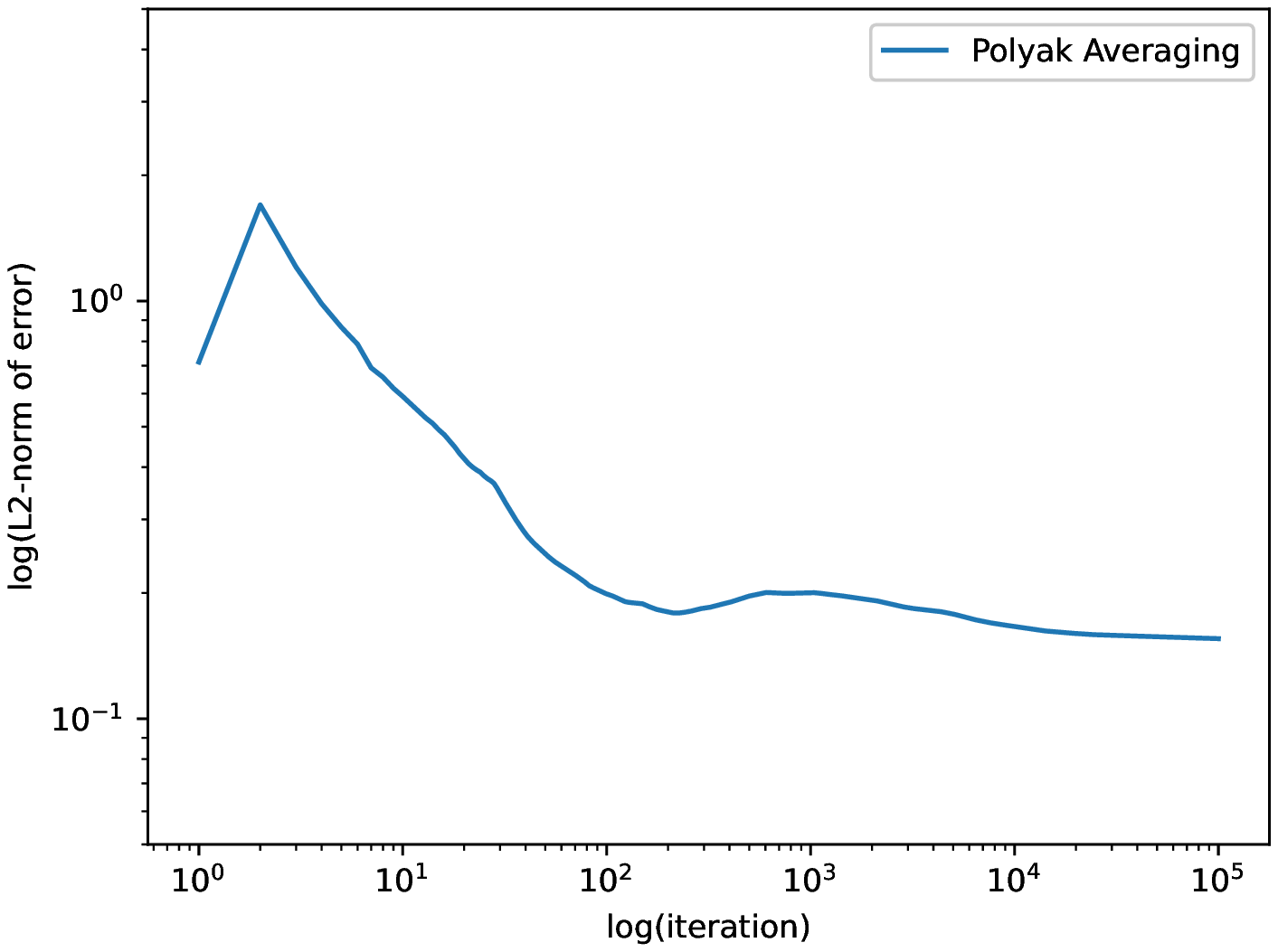}}
\setcounter {subfigure} {0} (d){\includegraphics[scale=0.35]{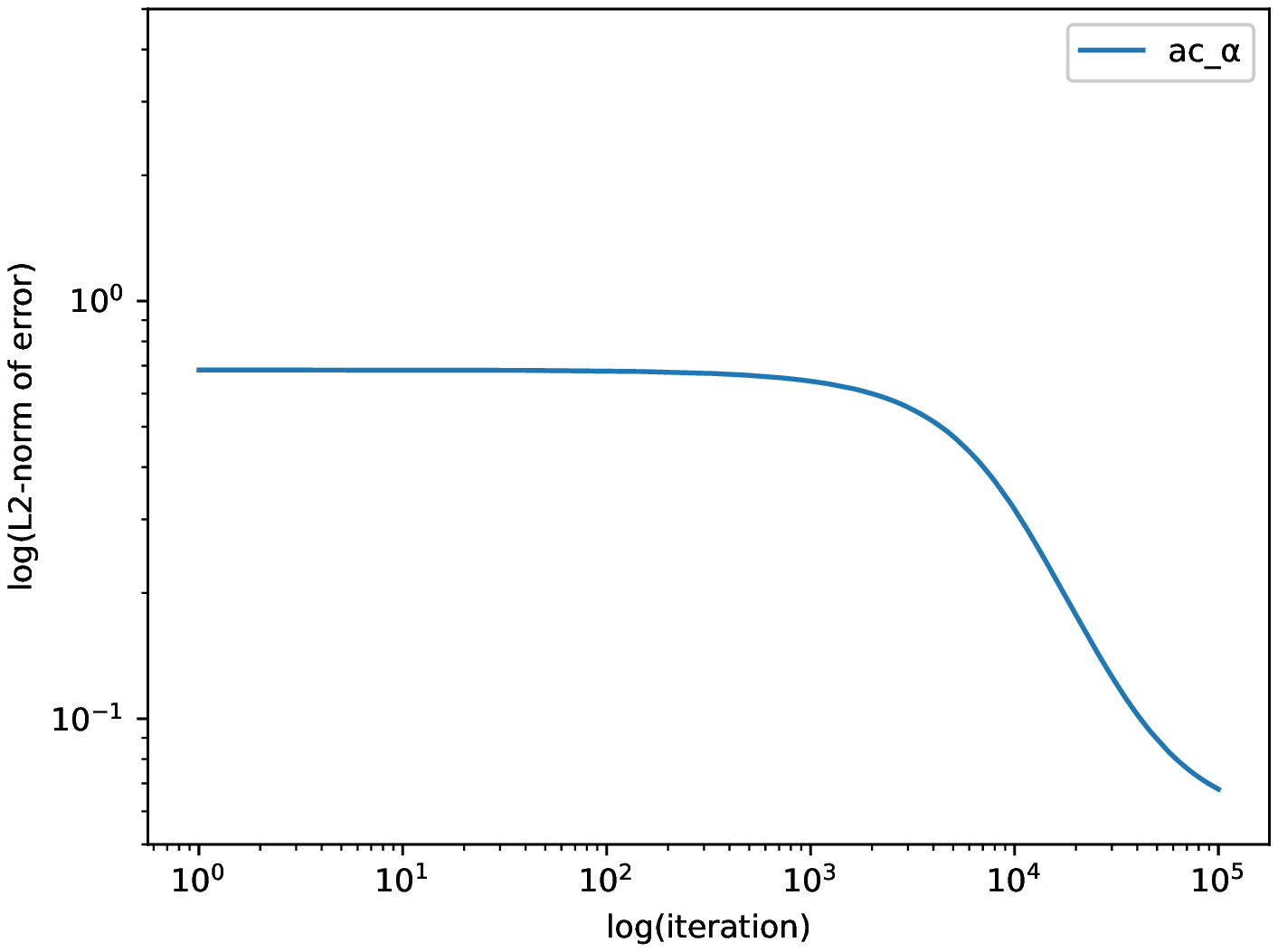}}
\caption{The M/M/1/500 queue with $\rho_i=2-\frac{3}{2N-4}(i-1)$. The figure shows the log-log plots of $L_2$-norm error
of Vanilla Algorithm (a), Projection Algorithm(b), Polyak Averaging Algorithm (c)
and our actor-critic algorithm (d). }
\label{fig:e2_2_1}
\end{figure}

\begin{table}[h!]
\begin{center}
\caption{The CPU time of each algorithm in the M/M/1/500 queue when they obtain the accuracy at $2\times10^{-1}$.}
\begin{tabular}{|c|c|c|c|c|}
\hline
Algorithm &Vanilla&Projection&Polyak Averaging&ac\_$\alpha$\\
\hline
Time(s)&1038.3279&429.6304  &505.2299 &186.9280 \\
\hline
Time(s)&753.9503&259.0671&268.5476&251.5370\\
\hline

\end{tabular}
\label{tb:e2_2}
\end{center}
\end{table}

\section{Summary and Conclusion}
In this paper, we propose a reinforcement learning (RL) method for the quasi-stationary distribution (QSD) in discrete time finite-state Markov chains. By minimizing the KL-divergence of two Markovian path distributions induced by the candidate distribution and the true target distribution, we introduce the formulation in terms of RL and derive the corresponding policy gradient theorem. We devise  an  actor-critic algorithm to learn the 
QSD in its parameterized form $\alpha_\theta$. This formulation of RL can get benefit from the development of the RL method and the optimization theory. We illustrated our actor-critic methods on two numerical examples by using simple tabular parametrization and gradient  descent optimization.  It  has been  observed that  the performance  of our method is more prominent  for large scale problem 

We only demonstrate the preliminary mechanism of the idea here, and there is much space left for improving the efficiency and extensions in future  works. The generalization  from  the  current consideration  of finite-state Markov chain    to the jump Markov process and the diffusion case is  in consideration.
More importantly, for very large or high dimensional state  space,   modern function approximation methods like
kernel methods or neural networks should be used for the distribution $\alpha_\theta$ and  the value function  $V_\psi$. The  recent  tremendous  advancement  of optimization techniques for  policy gradient in reinforcement  learning could also  contribute much to the efficiency improvement of our current formulation.

\section*{Acknowledgement}
LL acknowledges the support of NSFC 11871486. XZ acknowledges the support of Hong Kong RGC GRF     11305318.

\section*{Appendix}
In the appendix, we discuss the computation of the gradient of 
$\nabla_\theta \ln K_{\alpha_\theta}$
and 
$\nabla_\theta \ln K_{\beta_\theta}$.
Note $\nabla_\theta \alpha_\theta $
is straightforward since we model $\alpha$ in its parametrization form $\theta$. 
By definition \eqref{eqn:K_alpha},
$$
\nabla_\theta \ln K_{\alpha_\theta}(X_t,X_{t+1})=\frac{1-K(X_t,\mathcal{E})}{K_{\alpha_\theta}(X_t,X_{t+1})}
\nabla_\theta \alpha_\theta(X_{t+1})
$$
and
$$
\nabla_\theta \ln K_{\beta_\theta}(X_t,X_{t+1})=\frac{1-K(X_t,\mathcal{E})}{K_{\beta_\theta}(X_t,X_{t+1})}
\nabla \beta_\theta(X_{t+1}).
$$
where $K_\beta(x,y)=K(x,y)+(1-K(x,\mathcal{E}))\beta(y)$. The vector $K(x,\mathcal{E})$ for any $x$ can be  pre-computed and saved in tabular form. 
 
By  \eqref{eqn:beta}, 
the one-step distribution 
$\beta$ is computed below
$$\beta(X_{t+1})=
\sum_{x} \alpha(x) \bigg[ K(x,X_{t+1}) +
(1-K(x,\mathcal{E}))\alpha(X_{t+1}) \bigg]
\approx 
\frac{1}{n}
\sum_{i=1}^n 
 K(Z_i,X_{t+1}) +
(1-K(Z_i,\mathcal{E}))\alpha(X_{t+1}) $$
Here the samples $Z_i \sim \alpha$ and could be approximated by the stationary distribution $\mu$
and thus one may simply use the known sample
$X_t$ to replace $Z_i$ with $n=1$.

To find $\nabla_\theta \beta_\theta$, we use stochastic approximation again

 $$
\begin{aligned}
\nabla_\theta \beta_\theta(X_{t+1}) 
&=\sum_x \nabla \alpha_\theta(x) \left[K(x,X_{t+1})
+ (1-K(x,\mathcal{E})){\alpha_\theta}(X_{t+1})
\right]+ 
\left[\sum_{x} \alpha_\theta(x)   (1-K(x,\mathcal{E}))
\right] \nabla_\theta \alpha_\theta (y),
\\
&\approx
\frac{1}{n}
\sum_{i=1}^n 
\nabla \ln \alpha_\theta(Z_i) \left[K(Z_i,X_{t+1})
+ (1-K(Z_i,\mathcal{E})){\alpha_\theta}(X_{t+1})
\right]+ 
   (1-K(Z_i,\mathcal{E}))
 \nabla_\theta \alpha_\theta (X_{t+1}).
\end{aligned}
  $$

\bibliography{ref}

\begin{thebibliography}{10}

\bibitem{aldous1988two}
{\sc D.~Aldous, B.~Flannery, and J.~L. Palacios}, {\em Two applications of urn
  processes the fringe analysis of search trees and the simulation of
  quasi-stationary distributions of markov chains}, Probability in the
  engineering and informational sciences, 2 (1988), pp.~293--307.

\bibitem{artalejo2013stochastic}
{\sc J.~R. Artalejo, A.~Economou, and M.~J. Lopez-Herrero}, {\em Stochastic
  epidemic models with random environment: quasi-stationarity, extinction and
  final size}, Journal of mathematical biology, 67 (2013), pp.~799--831.

\bibitem{bebbington1997parallel}
{\sc M.~Bebbington}, {\em Parallel implementation of an
  aggregation/disaggregation method for evaluating quasi-stationary behavior in
  continuous-time markov chains}, Parallel Computing, 23 (1997),
  pp.~1545--1559.

\bibitem{benaim2015stochastic}
{\sc M.~Bena{\"\i}m and B.~Cloez}, {\em A stochastic approximation approach to
  quasi-stationary distributions on finite spaces}, Electronic Communications
  in Probability, 20 (2015), pp.~1--13.

\bibitem{berglund2012mixed}
{\sc N.~Berglund and D.~Landon}, {\em Mixed-mode oscillations and interspike
  interval statistics in the stochastic fitzhugh--nagumo model}, Nonlinearity,
  25 (2012), p.~2303.

\bibitem{blanchet2013empirical}
{\sc J.~Blanchet, P.~Glynn, and S.~Zheng}, {\em Empirical analysis of a
  stochastic approximation approach for computing quasi-stationary
  distributions}, in EVOLVE - A Bridge between Probability, Set Oriented
  Numerics, and Evolutionary Computation II, O.~Sch{\"u}tze, C.~A.
  Coello~Coello, A.-A. Tantar, E.~Tantar, P.~Bouvry, P.~Del~Moral, and
  P.~Legrand, eds., Berlin, Heidelberg, 2013, Springer Berlin Heidelberg,
  pp.~19--37.

\bibitem{blanchet2016analysis}
\leavevmode\vrule height 2pt depth -1.6pt width 23pt, {\em Analysis of a
  stochastic approximation algorithm for computing quasi-stationary
  distributions}, Advances in Applied Probability, 48 (2016), pp.~792--811.

\bibitem{Blei2017VI}
{\sc D.~M. Blei, A.~Kucukelbir, and J.~D. McAuliffe}, {\em Variational
  inference: A review for statisticians}, Journal of the American Statistical
  Association, 112 (2017), pp.~859--877.

\bibitem{boyd2004convex}
{\sc S.~Boyd, S.~P. Boyd, and L.~Vandenberghe}, {\em Convex optimization},
  Cambridge university press, 2004.

\bibitem{buckley2010analytical}
{\sc F.~Buckley and P.~Pollett}, {\em Analytical methods for a stochastic
  mainland--island metapopulation model}, Ecological modelling, 221 (2010),
  pp.~2526--2530.

\bibitem{chan2009quantitative}
{\sc D.~C. Chan, P.~K. Pollett, and M.~C. Weinstein}, {\em Quantitative risk
  stratification in markov chains with limiting conditional distributions},
  Medical Decision Making, 29 (2009), pp.~532--540.

\bibitem{clancy2011approximating}
{\sc D.~Clancy and S.~T. Mendy}, {\em Approximating the quasi-stationary
  distribution of the sis model for endemic infection}, Methodology and
  Computing in Applied Probability, 13 (2011), pp.~603--618.

\bibitem{collet2012quasi}
{\sc P.~Collet, S.~Mart{\'\i}nez, and J.~San~Mart{\'\i}n}, {\em
  Quasi-stationary distributions: Markov chains, diffusions and dynamical
  systems}, Springer Science \& Business Media, 2012.

\bibitem{de2004quasi}
{\sc M.~M. de~Oliveira and R.~Dickman}, {\em Quasi-stationary distributions for
  models of heterogeneous catalysis}, Physica A: Statistical Mechanics and its
  Applications, 343 (2004), pp.~525--542.

\bibitem{de2005simulate}
{\sc M.~M. de~Oliveira and R.~Dickman}, {\em How to simulate the
  quasistationary state}, Physical Review E, 71 (2005), p.~016129.

\bibitem{di2017jump}
{\sc G.~Di~Ges{\`u}, T.~Leli{\`e}vre, D.~Le~Peutrec, and B.~Nectoux}, {\em Jump
  markov models and transition state theory: the quasi-stationary distribution
  approach}, Faraday discussions, 195 (2017), pp.~469--495.

\bibitem{dykman1995statistical}
{\sc M.~I. Dykman, T.~Horita, and J.~Ross}, {\em Statistical distribution and
  stochastic resonance in a periodically driven chemical system}, The Journal
  of chemical physics, 103 (1995), pp.~966--972.

\bibitem{Jordan1999VI}
{\sc M.~I. Jordan, Z.~Ghahramani, T.~S. Jaakkola, and L.~K. Saul}, {\em {An
  Introduction to Variational Methods for Graphical Models}}, Machine Learning,
  37 (1999), pp.~183--233.

\bibitem{kushner2003stochastic}
{\sc H.~Kushner and G.~G. Yin}, {\em Stochastic approximation and recursive
  algorithms and applications}, vol.~35, Springer Science \& Business Media,
  2003.

\bibitem{lambert2008population}
{\sc A.~Lambert}, {\em Population dynamics and random genealogies}, Stochastic
  Models, 24 (2008), pp.~45--163.

\bibitem{landon2012perturbation}
{\sc D.~Landon}, {\em Perturbation et excitabilit{\'e} dans des modeles
  stochastiques de transmission de l’influx nerveux}, PhD thesis,
  Universit{\'e} d'Orl{\'e}ans, 2012.

\bibitem{lelievre2015low}
{\sc T.~Leli{\`e}vre and F.~Nier}, {\em Low temperature asymptotics for
  quasistationary distributions in a bounded domain}, Analysis \& PDE, 8
  (2015), pp.~561--628.

\bibitem{NIPS2016_b3ba8f1b}
{\sc Q.~Liu and D.~Wang}, {\em Stein variational gradient descent: A general
  purpose bayesian inference algorithm}, in Advances in Neural Information
  Processing Systems, D.~Lee, M.~Sugiyama, U.~Luxburg, I.~Guyon, and
  R.~Garnett, eds., vol.~29, Curran Associates, Inc., 2016.

\bibitem{martinez1994quasi}
{\sc S.~Martinez and J.~San~Martin}, {\em Quasi-stationary distributions for a
  brownian motion with drift and associated limit laws}, Journal of applied
  probability, 31 (1994), pp.~911--920.

\bibitem{meleard2012quasi}
{\sc S.~M{\'e}l{\'e}ard, D.~Villemonais, et~al.}, {\em Quasi-stationary
  distributions and population processes}, Probability Surveys, 9 (2012),
  pp.~340--410.

\bibitem{mnih2013playing}
{\sc V.~Mnih, K.~Kavukcuoglu, D.~Silver, A.~Graves, I.~Antonoglou, D.~Wierstra,
  and M.~Riedmiller}, {\em Playing atari with deep reinforcement learning},
  arXiv preprint arXiv:1312.5602,  (2013).

\bibitem{pollett1994efficient}
{\sc P.~Pollett and D.~Stewart}, {\em An efficient procedure for computing
  quasi-stationary distributions of markov chains by sparse transition
  structure}, Advances in Applied Probability, 26 (1994), pp.~68--79.

\bibitem{pollock2016scalable}
{\sc M.~Pollock, P.~Fearnhead, A.~M. Johansen, and G.~O. Roberts}, {\em The
  scalable langevin exact algorithm: Bayesian inference for big data}, arXiv
  preprint arXiv:1609.03436,  (2016).

\bibitem{polyak1992acceleration}
{\sc B.~T. Polyak and A.~B. Juditsky}, {\em Acceleration of stochastic
  approximation by averaging}, SIAM journal on control and optimization, 30
  (1992), pp.~838--855.

\bibitem{popova2018deep}
{\sc M.~Popova, O.~Isayev, and A.~Tropsha}, {\em Deep reinforcement learning
  for de novo drug design}, Science advances, 4 (2018), p.~eaap7885.

\bibitem{pmlr-v37-rezende15}
{\sc D.~Rezende and S.~Mohamed}, {\em Variational inference with normalizing
  flows}, in Proceedings of the 32nd International Conference on Machine
  Learning, F.~Bach and D.~Blei, eds., vol.~37 of Proceedings of Machine
  Learning Research, Lille, France, 07--09 Jul 2015, PMLR, pp.~1530--1538.

\bibitem{rose2020reinforcement}
{\sc D.~C. Rose, J.~F. Mair, and J.~P. Garrahan}, {\em A reinforcement learning
  approach to rare trajectory sampling}, New Journal of Physics, 23 (2021),
  p.~013013.

\bibitem{sani2007stochastic}
{\sc A.~Sani, D.~Kroese, and P.~Pollett}, {\em Stochastic models for the spread
  of hiv in a mobile heterosexual population}, Mathematical biosciences, 208
  (2007), pp.~98--124.

\bibitem{silver2018general}
{\sc D.~Silver, T.~Hubert, J.~Schrittwieser, I.~Antonoglou, M.~Lai, A.~Guez,
  M.~Lanctot, L.~Sifre, D.~Kumaran, T.~Graepel, et~al.}, {\em A general
  reinforcement learning algorithm that masters chess, shogi, and go through
  self-play}, Science, 362 (2018), pp.~1140--1144.

\bibitem{sutton2018reinforcement}
{\sc R.~S. Sutton and A.~G. Barto}, {\em Reinforcement learning: An
  introduction}, MIT press, 2018.

\bibitem{wang2020approximation}
{\sc A.~Q. Wang, G.~O. Roberts, and D.~Steinsaltz}, {\em An approximation
  scheme for quasi-stationary distributions of killed diffusions}, Stochastic
  Processes and their Applications, 130 (2020), pp.~3193--3219.

\bibitem{wang2013projection}
{\sc W.~Wang and M.~A. Carreira-Perpin{\'a}n}, {\em Projection onto the
  probability simplex: An efficient algorithm with a simple proof, and an
  application}, arXiv preprint arXiv:1309.1541,  (2013).

\bibitem{watkins2004fundamentals}
{\sc D.~S. Watkins}, {\em Fundamentals of matrix computations}, vol.~64, John
  Wiley \& Sons, 2004.

\bibitem{zheng2014stochastic}
{\sc S.~Zheng}, {\em Stochastic approximation algorithms in the estimation of
  quasi-stationary distribution of finite and general state space Markov
  chains}, PhD thesis, Columbia University, 2014.

\end{thebibliography}
 
\bibliographystyle{siam} 

\end{document}